\documentclass[10pt,twocolumn,letterpaper]{article}

\usepackage[pagenumbers]{wacv}

\definecolor{wacvblue}{rgb}{0.21,0.49,0.74}
\usepackage[pagebackref,breaklinks,colorlinks,allcolors=wacvblue]{hyperref}

\def\wacvPaperID{1761} \def\confName{WACV}
\def\confYear{2027}

\usepackage{wrapfig}

\usepackage[utf8]{inputenc}
\usepackage[T1]{fontenc}
\usepackage{amsmath,amssymb,amsthm}
\usepackage{booktabs}
\usepackage[protrusion=true,expansion=false]{microtype}
\usepackage{hyperref}
\usepackage{xcolor}
\usepackage{graphicx}
\usepackage{algorithm}
\usepackage{algpseudocode}
\usepackage{tikz}
\usepackage{pgfplots}
\usepackage{diagbox}

\pgfplotsset{compat=1.18}
\usetikzlibrary{arrows.meta,calc,decorations.pathreplacing,positioning}

\hypersetup{colorlinks=true,linkcolor=blue,citecolor=blue,urlcolor=blue,hypertexnames=false}

\newtheorem{proposition}{Proposition}
\newtheorem{theorem}{Theorem}

\newcommand{\Lap}{\mathrm{LapSum}}

\newcommand{\E}{\mathbb{E}}

\newcommand\marcin[1]{{#1}}

\title{Fast LapSum: Exact Differentiable Top-$k$ at Million Scale}

\makeatletter
\newcommand\blfootnote[1]{%
  \begingroup
    \renewcommand\thefootnote{}%
    \renewcommand\@makefnmark{}%
    \footnotetext{#1}%
  \endgroup
}
\makeatother

\author{
{\L}ukasz Struski\textsuperscript{1},\quad
Joanna Wojciechowicz\textsuperscript{2},\quad
Jakub Antczak\textsuperscript{2}\\[3pt]
Marcin Mazur\textsuperscript{1},\quad
Kamil Ksi\k{a}\.zek\textsuperscript{3,1},\quad
Jacek Tabor\textsuperscript{1}
}

\begin{document}
\maketitle
\blfootnote{%
\textsuperscript{1}Faculty of Mathematics and Computer Science, Jagiellonian University, Krak\'ow, Poland.\quad
\textsuperscript{2}Wroc{\l}aw University of Science and Technology, Wroc{\l}aw, Poland.\quad
\textsuperscript{3}Centre for Credible Artificial Intelligence, Warsaw University of Technology, Warsaw, Poland.\newline
E-mails: \texttt{lukasz.struski@uj.edu.pl}, \texttt{255747@student.pwr.edu.pl}, \texttt{268745@student.pwr.edu.pl}, \texttt{marcin.mazur@uj.edu.pl}, \texttt{kamil.ksiazek@pw.edu.pl}, \texttt{jacek.tabor@uj.edu.pl}.%
}

\begin{abstract}
    The top-$k$ operation is a fundamental building block of modern sparse computation, enabling token routing, expert activation, memory selection, and attention pruning. Yet standard hard top-$k$ blocks gradients, while existing continuous (soft) relaxations remain too costly for large-scale models. We introduce Fast LapSum, an exact-budget soft top-$k$ primitive whose GPU solver runs in linear time after sorting. Unlike prior linear-time methods such as DFTopK, which relax the normalization constraint, Fast LapSum is, to our knowledge, the first method to preserve an exact selection mass of $k$ while remaining fully differentiable end-to-end. Our solver combines a linear-time threshold computation with an analytical vector--Jacobian product, and for extreme scales employs probabilistic bracketing to sort only the uncertain middle band of kernel-noised scores. The resulting overhead is almost negligible: the solver processes $10^6$, $10^7$, and $10^8$ scores in $0.41$, $1.15$, and $5.23$\,ms, respectively. This makes exact soft top-$k$ practical for sparse routing, retrieval, and large-scale optimization. We demonstrate Fast LapSum on two demanding applications operating over millions of coordinates inside the training loop: generating megapixel sparse adversarial examples with an exact soft budget of ${\sim}0.02\%$ of an image's pixels, achieving an order-of-magnitude speedup over state-of-the-art methods, and training a fully differentiable sparse image coder from scratch.
\end{abstract}

\section{Introduction}
Sparse computation has become a principal approach to scaling neural networks. Mixture-of-experts models activate only a few experts, long-context models retain a limited set of tokens or memories, retrieval systems score millions of candidates and return only a small shortlist, and sparse attention or graph layers \marcin{select} only a small neighbourhood. These systems share a single operation, namely top-$k$ selection, which converts a dense score vector into a sparse, lower-cost \marcin{representational} subset.

Yet that same operation is a persistent obstacle to end-to-end learning. Hard top-$k$ \marcin{yields the desired sparsity} but almost no useful gradient through the decision boundary. Soft relaxations \marcin{restore gradients, but often at the cost of} dense matrices, transport solves, \marcin{sorting-like complexity}, or masks whose total mass is not directly controllable. For sparse neural systems, this is the wrong trade-off: a differentiable sparsifier that is itself expensive \marcin{merely relocates} the computational bottleneck.

We argue that differentiable top-$k$ should be treated as a systems primitive. It should keep the exact budget of hard top-$k$, expose useful gradients, and run cheaply enough to be used within the training loop. This paper provides such a primitive by making the exact LapSum operator~\cite{lapsum2025} GPU-native and million-scale. Given scores $r=(r_1,\ldots,r_{n})\in\mathbb{R}^n$, budget $k$, and temperature $\alpha<0$, LapSum returns probabilities
{\setlength{\abovedisplayskip}{0pt}
\setlength{\belowdisplayskip}{0pt}
\begin{equation}
\label{eq:lapsum-main}
p_i=\sigma\!\left(\frac{b-r_i}{\alpha}\right),\qquad
\sum_{i=1}^n p_i=k,
\end{equation}}\noindent
where $\sigma$ is the Laplace CDF and the scalar threshold $b$ enforces the
budget exactly. Previous GPU implementations solve for $b$ by bisection\marcin{, re-summing} all $n$ terms at every step. \marcin{Our approach, which we call Fast LapSum, instead computes} the entire threshold curve. Following an initial sorting step, a single stable prefix/suffix scan yields the \marcin{curve; the threshold is then determined by an index lookup and a closed-form root calculation}. The backward pass is \marcin{an} analytical vector-Jacobian product requiring a single reduction\marcin{}.

This changes the scale at which differentiable top-$k$ can be used. The exact full-sort solver \marcin{alone} gives nearly two orders of magnitude speed-up over the per-query bisection baseline. For million-scale vectors, where even sorting becomes the bottleneck, we add a binomial order-statistic bracket \marcin{that localizes the threshold, streams the full vector once, and sorts only the uncertain middle region. Below $10^6$ scores the solver performs a full sort; above it, the bracket confines sorting to a narrow interval while still recovering the exact threshold $b$. A worst case may widen this interval, but the expected running time remains linear.} The fast path stays in the low-millisecond band \marcin{across} the tested large-$n$ range and is $5.6\times$ faster than full sorting at $10^7$ scores.

\begin{figure}[t]
\centering
\includegraphics[width=\columnwidth]{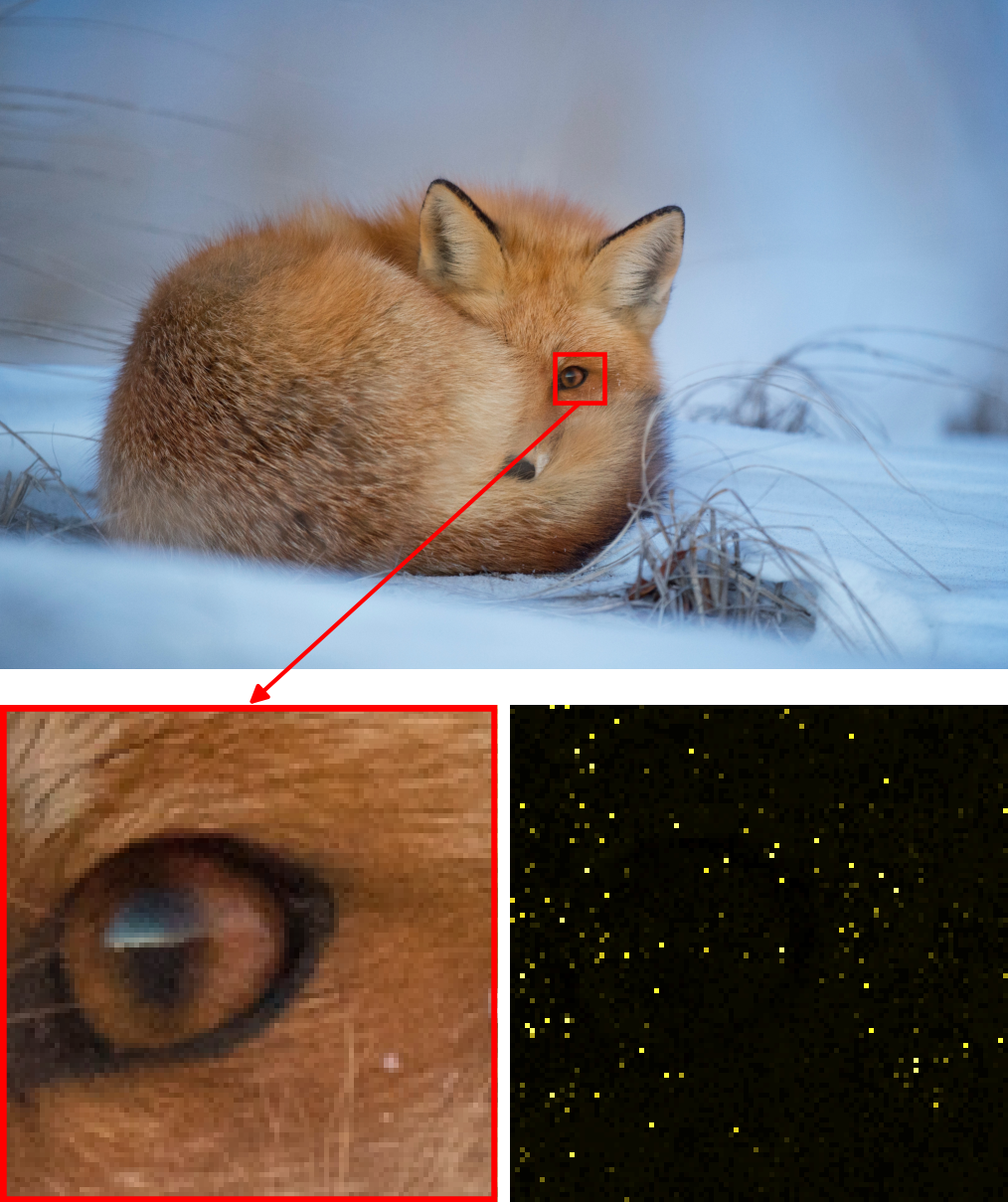}\\
\begin{minipage}[c]{0.5\columnwidth}
    \centering
    \small
    The attacked image
\end{minipage}\begin{minipage}[c]{0.5\columnwidth}
    \centering
    \small
    Where the budget goes
\end{minipage}\caption{\marcin{The Fast LapSum operator at work: a megapixel sparse adversarial attack. The red fox \emph{after} our attack is visually unchanged, yet ConvNeXt-V2 misclassifies it: the exact-budget soft top-$k$ perturbs the image by a \emph{total} of only ${\sim}150$ fully-changed-pixel equivalents ($0.005\%$ of $2.7$\,M pixels), concentrated on class-defining texture rather than scattered across the image. Below, the red box is enlarged on the left to a zoom of the eye, with the matching per-pixel change on the right (see~\cref{sec:experiments}).}}
\label{fig:teaser}
\end{figure}

Beyond microbenchmarks, the same operator applies to real tasks at full data scale. As a demanding test case, we build a million-coordinate sparse adversarial attack. The exact-budget soft top-$k$ optimizes the allocation of a fixed perturbation constraint: remarkably, a soft mass equivalent to just ${\sim}600$ fully saturated pixels across $3.3$\,million coordinates (about $0.02\%$) \marcin{suffices to induce} misclassification in ConvNeXt-V2 (\cref{sec:adv}; \cref{fig:teaser} shows the same attack on another image). As proof of concept that the operator drives a real million-scale model, we also \marcin{build} a differentiable megapixel image coder (\cref{sec:coder}) inside a genuine rate-distortion loop.

The practical consequence is that differentiable top-$k$ no longer has to dominate the computation. In the regimes targeted by sparse models \marcin{(routing, retrieval, and sparse fine-tuning)}, Fast LapSum is small enough to run as a component inside a much larger network instead of as an expensive relaxation wrapped around it. The exact budget is preserved throughout ($\sum_i p_i=k$ for every score distribution), which matters whenever the budget is a real capacity: expert slots, memory entries, retrieval candidates, active weights. Here, we focus on the operator and its speed.

The most closely related concurrent work is DFTopK~\cite{dftopk2025}, which \marcin{likewise targets} linear-time differentiable top-$k$, but derives a closed-form threshold by \emph{relaxing} the normalization constraint $\sum_i p_i=k$. We demonstrate that this trade-off is unnecessary: our formulation achieves linear complexity while preserving the exact normalization. Our operator outperforms their reported baseline and maintains a flat scaling profile three orders of magnitude beyond their largest tested size (Supplementary Material, Sec. 8).

\marcin{Our contributions are:}
\begin{itemize}
\item \marcin{\textbf{Fast LapSum, an exact-budget, linear-time differentiable top-$k$ operator.} To our knowledge, this is the first GPU solver to achieve linear post-sorting work while strictly preserving the unrelaxed constraint $\sum_i p_i=k$. It retains LapSum's closed-form vector-Jacobian product, and so removes any need for biased surrogates such as hard masking or straight-through estimators.}
\item \marcin{\textbf{A probabilistic bracketing pipeline for million-scale vectors.} A kernel-noised subsample localizes the threshold, a single streaming pass retains only the uncertain middle band, and a verification step certifies the exact bracket before the closed-form LapSum solution, keeping expected work linear even where sorting alone would dominate.}
\item \marcin{\textbf{Evidence that the operator is a negligible cost at scale, on benchmarks and real tasks.} The dispatched solver stays in the low-millisecond band out to $10^8$ scores, orders of magnitude faster than previous exact implementations; and the same operator drives two full-scale, end-to-end systems, a million-coordinate sparse adversarial attack and a differentiable megapixel image coder in a genuine rate-distortion loop.}
\end{itemize} \section{Related Work}
\textbf{Differentiable Ranking and Top-$k$.}\hspace{0.3cm}
Continuous relaxations of sorting and ranking include NeuralSort \cite{neuralsort2019}, SoftSort~\cite{softsort2020}, differentiable sorting networks~\cite{diffsort2021}, and the permutahedron/isotonic-optimization operators of Blondel et al.~\cite{blondel2020}. SOFT top-$k$ \cite{xie2020softtopk} formulates selection as an entropic optimal-transport problem, while convex-analysis approaches can produce sparse differentiable top-$k$ operators via isotonic optimization~\cite{sander2023topk}. These methods return richer objects such as soft ranks, soft permutations\marcin{,} or projections, but they \marcin{incur} $\mathcal{O}(n^2)$ or iterative cost.  Our target is narrower and more operational: a budgeted soft mask with the same output shape as hard top-$k$. We build directly on LapSum~\cite{lapsum2025}, which unifies ranking, sorting\marcin{,} and top-$k$ through a single Laplace-CDF threshold equation. Our contribution is to make that exact operator GPU-native and million-scale. Concurrently, DFTopK~\cite{dftopk2025} also targets linear-time differentiable top-$k$ for large-scale recommendation by relaxing the normalization constraint to obtain a closed-form threshold; we compare against it directly in~\Cref{sec:fast-exp} and Supplementary Material.

\vspace{2mm}
\noindent \textbf{DFTopK.}\hspace{0.3cm} \marcin{Differentiable Fast Top-$k$ (DFTopK)~\cite{dftopk2025}} relies on a simple, fixed-threshold construction. Let $r_{(1)}\ge\cdots\ge r_{(n)}$ denote the order statistics of the scores. Hard top-$k$ separates the selected and unselected sets at the gap between $r_{(k)}$ and $r_{(k+1)}$. DFTopK places its soft threshold at the midpoint $c=\tfrac12(r_{(k)}+r_{(k+1)})$ and applies a smooth step around this fixed threshold. This avoids solving the exact normalization equation $\sum_i p_i=k$. The threshold is obtained from two order statistics, so the operator is fast and linear-time in practice. The trade-off is that the resulting soft mask is not exact-budget at finite temperature\marcin{:} its total mass can drift from $k$, and the threshold gradient flows only through the two boundary scores $r_{(k)}$ and $r_{(k+1)}$. \Cref{sec:operator} states this formally: DFTopK is the zero-temperature limit of LapSum (\Cref{thm:zerotemp}), while at finite temperature LapSum adjusts the \marcin{threshold} to preserve $\sum_i p_i=k$.

\vspace{2mm}
\noindent \textbf{Where Differentiable Top-$k$ Is Used.}\hspace{0.3cm}
A budgeted differentiable selector is the shared mechanism behind many sparse systems, which today mostly fall back on non-differentiable hard selection or task-specific penalties. Sparse mixture-of-experts models route each token to a few experts under a fixed capacity~\cite{shazeer2017moe,fedus2022switch}; adaptive
token pruning drops all but the most informative tokens in vision and language transformers~\cite{rao2021dynamicvit}; sparse autoencoders for interpretability fire only the top-$k$ latent units~\cite{makhzani2014ksparse,gao2024topksae}; network pruning and movement pruning keep a fixed fraction of
weights~\cite{han2015,frankle2019lottery,movement2020}. Retrieval, sparse attention, top-$k$ pooling and gradient compression use the same pattern at a larger candidate scale. In each case, the active set is chosen by a hard top-$k$ that blocks gradients while enforcing a hard budget. That is precisely the regime \marcin{for which} an exact-budget, GPU-native, million-scale differentiable top-$k$ is built. \section{\marcin{Background: The LapSum Operator}}
\label{sec:operator}
This section reviews LapSum from an elementary viewpoint. Hard top-$k$ is a quantile \marcin{threshold}; LapSum is obtained by blurring the scores near that \marcin{threshold} before the \marcin{threshold} is applied.

In the ordinary top-$k$, if the scores are distinct, there is a cut point between the $k$-th and $(k{+}1)$-st largest scores. Everything above the cut gets a mask value $1$, everything below it gets $0$, and the mask has exactly $k$ ones. This is the correct combinatorial object, but an unsuitable neural network layer. The input is discrete at the decision boundary, and moving one score by an arbitrarily small amount can change its mask value discontinuously.

The natural remedy is to preserve the \marcin{threshold} logic while shifting the perspective from deterministic to probabilistic. Rather than treating each $r_i$ as a deterministic point, we introduce a stochastic perturbation $r_i+\varepsilon_i$. Consequently, the top-$k$ operation remains a boundary test, but its output relaxes from a discrete binary decision to a continuous probability.
For item $i$, LapSum measures how often a perturbed copy of the score falls on the top-$k$ side of the \marcin{threshold}. This frequency could be estimated by Monte Carlo perturbations; LapSum instead computes the probability analytically from the smoothing kernel, so the noised top-$k$ construction becomes a deterministic differentiable operator. \marcin{\Cref{tab:hard-vs-soft} places the two constructions side by side, showing how each step of hard top-$k$, \marcin{thresholding}, the per-score test, and the resulting mask, maps to its blurred (LapSum) counterpart.}

\begin{table}[t]
\centering
\small 
\begin{tabular}{@{}p{0.46\columnwidth}|p{0.49\columnwidth}@{}}
\textbf{Hard top-$k$} & \textbf{Blurred top-$k$ / LapSum idea} \\
\hline
Find the \marcin{threshold} between the $k$-th and $(k{+}1)$-th scores.
&
Find a \marcin{threshold} for the noisy scores so that the expected number above it is
$k$.\\[2pt]
Set $m_i=1$ if $r_i$ is above the \marcin{threshold}, and $m_i=0$ otherwise.
&
Set $p_i$ to the probability that the noised copy of $r_i$ lands above the
\marcin{threshold}.\\[2pt]
The output is a binary mask with exactly $k$ ones.
&
The output is a soft mask whose total mass is exactly $k$.
\end{tabular}
\caption{\marcin{Hard top-$k$ versus the blurred (LapSum) construction.}}
\label{tab:hard-vs-soft}
\end{table}

LapSum~\cite{lapsum2025} uses a Laplace kernel because this blurred-quantile view becomes algebraically simple. Write $t=|\alpha|>0$ for the temperature. Given scores \marcin{$r=(r_1,\ldots,r_n)\in\mathbb{R}^n$} and budget $k$, LapSum chooses a single scalar \marcin{threshold} $b$ and returns
{\setlength{\abovedisplayskip}{5pt}
\setlength{\belowdisplayskip}{0pt}
\begin{equation}
p_i=\sigma\!\left(\frac{r_i-b}{t}\right),\qquad
\sum_{i=1}^n p_i=k .
\end{equation}}\noindent
This is the same operator as~\cref{eq:lapsum-main}, only written with the positive temperature $t$ instead of the negative parameter $\alpha$. The Laplace CDF is
{\setlength{\abovedisplayskip}{0pt}
\begin{equation}
\sigma(u)=
\begin{cases}
\tfrac12 e^u, & u\le 0,\\
1-\tfrac12 e^{-u}, & u>0 .
\end{cases}
\end{equation}}\noindent
Thus a score far above $b$ gets $p_i\approx1$, a score far below $b$ gets $p_i\approx0$, and a score exactly at the \marcin{threshold} gets $p_i=\tfrac12$. The \marcin{threshold} is then shifted until the soft mass equals the requested
budget $k$. Equivalently,
\begin{equation}
p_i=\Pr(r_i+\varepsilon\ge b),\qquad
\varepsilon\sim\mathrm{Laplace}(0,t).
\end{equation}
\Cref{fig:lapsum-idea} shows this shaded mass. Large $t$ blurs each score over a wide interval and gives a soft mask; as $t\to0$, the blur disappears, the smooth steps become hard steps, and LapSum recovers ordinary
top-$k$.

Consequently, the LapSum formulation collapses the optimization into a one-dimensional search for the \marcin{threshold} $b$, after which each $p_i$ is obtained through an element-wise CDF evaluation. \marcin{In the Fast LapSum full-sort solver, the ordered scores construct a smoothed cumulative curve, and finding $b$ becomes an inverse-CDF query, in contrast to the per-query bisection of prior implementations. The exactness derives directly from the choice of a Laplace distribution. The budget curve within any interval between consecutive scores reduces to the sum of two exponentials with a closed-form local root, the property our solver exploits to replace iterative root-finding with a direct lookup.}

The derivatives result from the same scalar equation. Since the \marcin{threshold} is defined implicitly by $\sum_i p_i=k$, differentiating that equation determines how $b$ changes with the scores, and every derivative of $p_i$ then follows from the element-wise CDF. In vector-Jacobian form, the original LapSum backward is
\begin{equation}
\label{eq:vjp}
\begin{split}
(\nabla_r p)^\top g=f\odot\left(g-\langle g,q\rangle\right), \\
q_i=\frac{f_i}{\sum_j f_j},\quad
f_i=\frac{1}{2|\alpha|}e^{-|b-r_i|/|\alpha|}.
\end{split}
\end{equation}
The computational challenge reduces to the forward pass, specifically the efficient determination of the \marcin{threshold} $b$. \marcin{This forward search is the sole target of Fast LapSum, which leaves the backward of \cref{eq:vjp} untouched and accelerates only the determination of
$b$.}

The same quantile view underlies our large-scale solver. Sampling only localizes the \marcin{threshold}; it does not approximate the final mask. For any candidate \marcin{threshold}, each noisy sample is a Bernoulli trial indicating whether it falls below the true quantile. The resulting binomial order-statistic bracket yields a short interval containing the \marcin{threshold} with high probability. We then return to the exact LapSum algebra and solve for the \marcin{threshold}. Thus, the fast method combines probabilistic \marcin{threshold} search with an exact budget-preserving solution (see~\cref{sec:prob-main}).

\begin{figure}[t]
\centering
\includegraphics[width=\columnwidth]{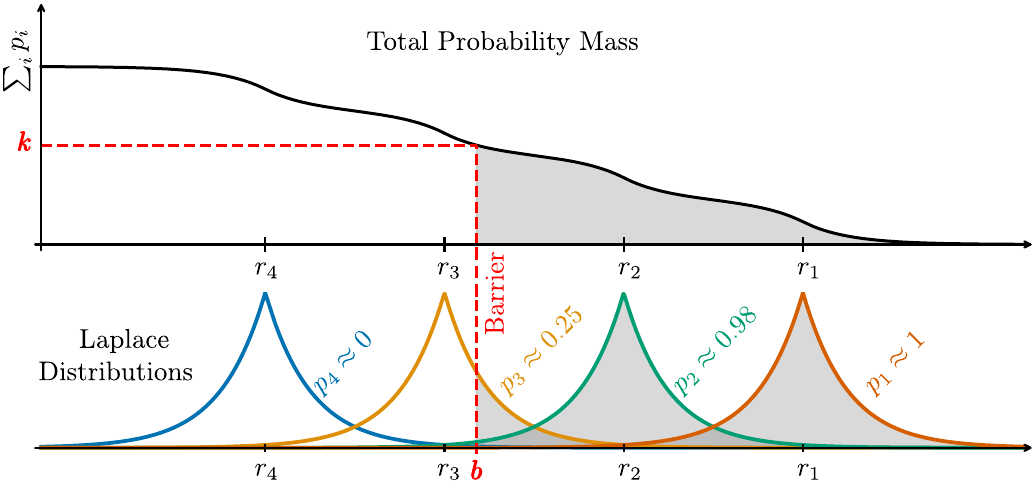}
\caption{The LapSum geometry.  Each score $r_i$ carries a Laplace kernel of width
$t=|\alpha|$; its membership $p_i$ is the shaded mass of that kernel \emph{past}
the \marcin{threshold} $b$ (i.e.\ $p_i=\Pr(r_i+\varepsilon\ge b)$).  The \marcin{threshold} slides
until $\sum_i p_i=k$.  Sharpening $t\to0$ turns each kernel into a step and
recovers hard top-$k$.}
\label{fig:lapsum-idea}
\end{figure}

Therefore, the relation to DFTopK becomes explicit. DFTopK fixes the \marcin{threshold} at the midpoint between the two boundary scores. LapSum learns the \marcin{threshold} from the exact budget equation, but in the zero-temperature limit, those two descriptions coincide. We write $\mathrm{LapSum}_t$ for the operator at temperature $t$.

\begin{theorem}[$\mathrm{LapSum}_t\to$ DFTopK as $t\to0$]\label{thm:zerotemp}
For distinct scores and an integer budget $k$, as $t\to0$, the LapSum \marcin{threshold} converges to the midpoint of the two boundary order statistics,
{\setlength{\abovedisplayskip}{0pt}
\setlength{\belowdisplayskip}{5pt}
\begin{equation}
b \;\longrightarrow\; \tfrac12\big(r_{(k)}+r_{(k+1)}\big),
\end{equation}}\noindent
and the mask converges to the hard top-$k$ indicator $p_i\to\mathbf 1[r_i>b]$ with $\sum_i p_i=k$. DFTopK defines its \marcin{threshold} as exactly this midpoint $\tfrac12\big(r_{(k)}+r_{(k+1)}\big)$ of the $k$-th and $(k{+}1)$-th scores, hence DFTopK is the zero-temperature limit of $\mathrm{LapSum}_t$.
\end{theorem}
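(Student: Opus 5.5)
Fix distinct scores with order statistics $r_{(1)}>\cdots>r_{(n)}$, an integer $1\le k\le n-1$, and write $c=\tfrac12(r_{(k)}+r_{(k+1)})$ and $\delta=\tfrac12(r_{(k)}-r_{(k+1)})>0$. The plan is to study the budget function $F_t(b)=\sum_i\sigma((r_i-b)/t)$, which is continuous and strictly decreasing in $b$, with limits $n$ and $0$. Hence $b_t$ is the unique root of $F_t(b)=k$. To show $b_t\to c$, it suffices to fix $\eta\in(0,\delta)$ and prove that, for all small $t$, $F_t(c-\eta)>k>F_t(c+\eta)$. Monotonicity then forces $b_t\in(c-\eta,c+\eta)$.

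For the key estimate, take $b=c+s$ with $|s|\le\eta$. Each of the top $k$ scores satisfies $r_i-b\ge r_{(k)}-c-\eta=\delta-\eta>0$, so its term equals $1-\tfrac12 e^{-(r_i-b)/t}$. Each of the bottom $n-k$ scores satisfies $r_i-b\le -(\delta-\eta)<0$, so its term equals $\tfrac12 e^{(r_i-b)/t}$. This gives
\begin{equation}
F_t(c+s)-k=\tfrac12\sum_{i>k} e^{-(b-r_{(i)})/t}-\tfrac12\sum_{i\le k}e^{-(r_{(i)}-b)/t}.
\end{equation}
As $t\to0$, each sum is dominated by its boundary term. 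The first is $\tfrac12 e^{-(\delta+s)/t}(1+o(1))$ and the second is $\tfrac12 e^{-(\delta-s)/t}(1+o(1))$, because the remaining terms are exponentially smaller by factors $e^{-\text{gap}/t}$, where the gap is a positive difference of consecutive distinct scores. At $s=\eta$ the second exponent is smaller in magnitude, so the expression is eventually negative; at $s=-\eta$ it is eventually positive. This establishes the bracketing.

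This asymptotic comparison is the main technical step. The difficulty is making the $o(1)$ uniform: one needs to bound each tail sum by $(n-1)$ times its next-largest exponential. The computation also shows, without any limit, that the leading-order root is exactly $s=0$ by symmetry, which explains why the limit is the midpoint rather than some other point in the gap. In fact, one could solve the two-exponential equation explicitly and obtain $b_t=c+O(t\,e^{-\text{gap}/t})$, but the bracketing argument is enough.

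With $b_t\to c$ established, the mask convergence follows directly. For $i$ among the top $k$, we have $(r_i-b_t)/t\ge(\delta-\eta)/t\to\infty$, so $p_i\to1$. For the others, $(r_i-b_t)/t\to-\infty$, so $p_i\to0$. The limit is therefore $\mathbf 1[r_i>c]$, which has exactly $k$ ones, and $\sum_ip_i=k$ holds for every $t$ by construction. Finally, DFTopK's threshold is by definition the same $c$, so the limiting LapSum operator coincides with DFTopK, which gives the last sentence of the theorem. The edge cases $k=0$ and $k=n$ have no root, or have $b_t\to\mp\infty$, and are excluded since $r_{(k+1)}$ must exist.
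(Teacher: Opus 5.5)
Your proof is correct. It uses the same mechanism as the paper's sketch: only $r_{(k)}$ and $r_{(k+1)}$ matter in the limit, and the symmetric Laplace tails balance exactly at the midpoint. The route is different, though. The paper replaces every non-boundary term by its saturated value $0$ or $1$, then solves the reduced equation $\sigma((r_{(k)}-b)/t)+\sigma((r_{(k+1)}-b)/t)=1$ exactly using $\sigma(x)+\sigma(-x)=1$. You instead keep all $n$ terms and use monotonicity of $F_t$ to reduce everything to two sign checks at $c\pm\eta$.

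The paper's version is shorter and makes the symmetry transparent, but as a sketch it leaves two things implicit that your argument supplies.
\begin{itemize}
\item It does not show that $b_t$ eventually lies in the gap $(r_{(k+1)},r_{(k)})$.
\item It does not show that discarding the other terms' deviations from $0/1$ is harmless. This is not automatic: near the midpoint the two boundary terms are themselves within $O(e^{-\delta/t})$ of $1$ and $0$. The reduction is justified only after checking that the discarded deviations are of strictly smaller exponential order, which is exactly your comparison.
\end{itemize}
Your closed-form two-exponential root also gives the rate $b_t=c+O(te^{-\mathrm{gap}/t})$, which the sketch does not.

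One simplification: the uniformity you flag as the main technical difficulty is not needed, because you only evaluate at two fixed points. Bound each sum by its number of terms times its largest term. This gives
$F_t(c+\eta)-k\le\tfrac12(n-k)e^{-(\delta+\eta)/t}-\tfrac12e^{-(\delta-\eta)/t}<0$ as soon as $e^{2\eta/t}>n-k$.
Symmetrically, $F_t(c-\eta)-k\ge\tfrac12e^{-(\delta-\eta)/t}-\tfrac12ke^{-(\delta+\eta)/t}>0$ as soon as $e^{2\eta/t}>k$.
This bound never uses the gaps between non-boundary scores. So for the convergence of $b_t$ and of the mask, distinctness is only needed in the form $r_{(k)}>r_{(k+1)}$.
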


\noindent\emph{Proof sketch.}
As $t\to0$, all but the two boundary scores saturate to $0$ or $1$, so the budget
equation reduces to
$\sigma((r_{(k)}-b)/t)+\sigma((r_{(k+1)}-b)/t)=1$.  The Laplace CDF satisfies
$\sigma(x)+\sigma(-x)=1$, so this holds iff
$(r_{(k)}-b)/t=-(r_{(k+1)}-b)/t$, i.e.\ $b$ is the midpoint.\hfill$\square$

Consequently, DFTopK represents a specific limiting case of the broader LapSum family: the zero-temperature regime. At finite temperature ($t>0$), LapSum behaves differently, moving the \marcin{threshold} to the point where the budget equation balances, so $\sum_i p_i=k$
holds exactly for the observed score distribution. DFTopK fixes the midpoint
and allows the active count to drift. The gradients differ for the same reason: the LapSum \marcin{vector-Jacobian product} (\cref{eq:vjp}) weights every score by the boundary density $f_i$, whereas DFTopK's \marcin{threshold} gradient reaches only $r_{(k)}$ and $r_{(k+1)}$. \section{Probabilistic Bracketing: Localizing the Threshold}
\label{sec:prob-main}

\Cref{sec:operator} reduced LapSum to a single task, finding the scalar \marcin{threshold} $b$. If all scores are sorted, the original LapSum algebra finds $b$ exactly by scanning the sorted curve and solving a local closed-form equation (see Supplementary Material). At the million-scale, however, sorting the whole vector becomes inefficient because the \marcin{threshold} is local. Scores far below the \marcin{threshold} remain inactive, while scores far above it are active, and neither group needs to be ordered internally.

Fast LapSum's large-$n$
solver therefore separates the broad \marcin{threshold} localization from its exact determination. First, we localize $b$ probabilistically from a small sample of kernel-noised scores. Then we perform a forward pass of the full vector once, keeping only the ambiguous middle band, and applying the exact LapSum solver within that band. Sampling is not used to approximate the output mask; it only identifies where the exact computation is required, so the final operator still satisfies $\sum_i p_i=k$.

This turns the usual global-sort bottleneck into a bracketed selection problem. If an interval $[a_L,a_R)$ contains the true threshold $b$, then almost all scores can be collapsed into two exact tail summaries.  For $\alpha<0$, scores below $a_L$ and above $a_R$ stay on fixed branches of the Laplace CDF for every
$b\in[a_L,a_R)$. Their entire contribution is represented by two stable sums,
{\setlength{\abovedisplayskip}{5pt}
\setlength{\belowdisplayskip}{5pt}
\begin{equation}
\label{eq:tails-main}
\tilde S_L=\sum_{r_i<a_L} e^{(a_L-r_i)/\alpha},\qquad
\tilde S_R=\sum_{r_i\ge a_R} e^{(r_i-a_R)/\alpha}.
\end{equation}}\noindent
Because all exponents are non-positive, the summation is numerically stable without a log-sum-exp stabilization step. The only values requiring exact ordering are the scores in the middle set $M=\{r_i:a_L\le r_i<a_R\}$, see~\cref{fig:ks-main}.

\begin{figure}[thbp]
\centering
\vspace{-0.2\baselineskip}
\resizebox{\columnwidth}{!}{\begin{tikzpicture}[
font=\fontsize{6}{7.2}\selectfont,
  arr/.style={-Latex,semithick},
  dot/.style={circle,fill=green!50!black,inner sep=1.0pt},
]
\draw[->] (0.35,0.38) -- (6.45,0.38) node[pos=0.93, below, inner sep=1pt, yshift=-2pt] {Score};
  \draw[->] (0.35,0.38) -- (0.35,3.8) node[pos=0.5, rotate=90, anchor=center, xshift=5pt, yshift=5pt] {Rank\,/\,Count};

\fill[blue!45,fill opacity=0.15]  (0.65,0.38) rectangle (2.55,1.28);
  \fill[green!55,fill opacity=0.19] (2.55,0.38) rectangle (4.05,1.68);
  \fill[red!45,fill opacity=0.14]   (4.05,0.38) rectangle (6.05,1.18);
  \draw[blue!55!black,opacity=0.45]  (0.65,1.28) -- (2.55,1.28);
  \draw[green!45!black,opacity=0.45] (2.55,1.68) -- (4.05,1.68);
  \draw[red!55!black,opacity=0.45]   (4.05,1.18) -- (6.05,1.18);
  \node[font=\normalsize] at (1.55,0.7) {$L$};
  \node[font=\normalsize] at (3.6,0.7)  {$M$};
  \node[font=\normalsize] at (5.1,0.7)  {$R$};
  \node[blue!70!black,align=center,inner sep=1pt] at (1.55,1.45)
       {tail sum $\tilde{S}_L$};
  \node[red!70!black,align=center,inner sep=1pt] at (5.1,1.35)
       {tail sum $\tilde{S}_R$};

\fill[green!35,fill opacity=0.10] (2.55,0.38) rectangle (4.05,3.6);

\foreach \x/\y in {2.72/1.06,2.95/1.38,3.16/1.16,3.42/1.54,3.72/1.28,3.92/1.48}
     \node[dot] at (\x,\y){};
  \draw[arr,green!50!black] (2.1,2.1) -- (2.8,1.5);
  \node[green!40!black,align=center,inner sep=2pt] at (1.6,2.3)
       {sort only $M$\\$|M|\sim n^{2/3}$};

\draw[gray!45,semithick] (0.65,2.95) -- (6.05,2.95);
  \foreach \x in {0.82,1.08,1.42,1.76,2.08,2.38,2.74,3.05,3.3,3.58,3.88,4.22,4.56,4.95,5.34,5.72}
    \node[circle,fill=gray!55,inner sep=0.9pt] at (\x,2.95){};
  \node[gray!55!black,anchor=west,inner sep=1pt] at (0.55,3.18)
       {$K$ noised samples};

\draw[green!50!black,densely dotted] (2.55,0.38) -- (2.55,2.95);
  \draw[green!50!black,densely dotted] (4.05,0.38) -- (4.05,2.95);
  \node[circle,fill=green!55!black,inner sep=1.6pt] at (2.55,2.95){};
  \node[circle,fill=green!55!black,inner sep=1.6pt] at (4.05,2.95){};
  \draw[green!55!black,line width=1.3pt]
       (2.55,2.98) -- (2.55,3.28) -- (4.05,3.28) -- (4.05,2.98);
  \node[green!35!black,anchor=south,align=center,inner sep=1pt] at (3.3,3.31)
       {two order statistics\\$[a_L,a_R)$};

\draw[red!70!black,thick,dashed] (3.3,0.38) -- (3.3,2.95);
  \node[anchor=north,inner sep=1.5pt] at (2.55,0.36) {$a_L$};
  \node[red!70!black,anchor=north,inner sep=1.5pt] at (3.3,0.36) {$b$};
  \node[anchor=north,inner sep=1.5pt] at (4.05,0.36) {$a_R$};
\end{tikzpicture}}
\caption{The sampled order statistics define thresholds $a_L$ and $a_R$ around the \marcin{threshold} $b$. These thresholds partition the samples into subsets $L$, $M$, and $R$. The tails $L$ and $R$ are aggregated into $\tilde S_L$ and $\tilde S_R$, whereas only the middle subset, $M=\{r_i : a_L \le r_i < a_R\}$, is sorted and used by the exact solver.}
\label{fig:ks-main}
\end{figure}

\vspace{0mm}
\noindent \textbf{\marcin{Small-Sample Guarantee.}}\hspace{0.3cm}
The bracket is not selected heuristically; it is exactly the probabilistic counterpart of the \marcin{threshold} construction described in~\cref{sec:operator}. For $\alpha<0$, $\sigma((b-r_i)/\alpha)=\Pr(r_i+\varepsilon\ge b)$ with
$\varepsilon\sim\mathrm{Laplace}(0,|\alpha|)$. If $Y=r_I+\varepsilon$ and $I$ is uniform on $\{1,\dots,N\}$, then $\Lap(b)=N\,$ and $\Pr(Y\ge b)=k$. Therefore $b$ is the $q=(N-k)/N$ quantile of the kernel-noised mixture~$Y$.

We enclose that quantile using standard order statistics. We draw $K$ kernel-noised samples $y_s=r_{i_s}+\varepsilon_s$ and sort them. The number of samples \marcin{at or below} the true threshold,
$\#\{s:y_s\le b\}$, \marcin{follows exactly $\mathrm{Binomial}(K,q)$, so two sampled order statistics bracket $b$:}
{\setlength{\abovedisplayskip}{5pt}
\setlength{\belowdisplayskip}{5pt}
\begin{equation}
\label{eq:binom-bracket}
\begin{aligned}
a_L&=y_{(j)},\qquad a_R=y_{(l)},\\
j&=\big\lfloor Kq-z\sqrt{Kq(1-q)}\big\rfloor,\\
l&=\big\lceil Kq+z\sqrt{Kq(1-q)}\big\rceil,
\end{aligned}
\end{equation}}\noindent
with $\Pr(a_L\le b<a_R)\ge 1-2\Phi(-z)$. Crucially, this step requires \marcin{no} auxiliary LapSum evaluations; the bracket is derived directly from a pair of perturbed sample values. While this establishes a formal guarantee under standard i.i.d.\ assumptions, our highest performance implementation relies on systematic sampling (combining a strided memory view with pre-computed kernel noise). We adopt a conservative $z=5.16$. The validated path explicitly checks the endpoints and widens the bracket when this empirical fast path fails to locate $b$. Once the bracket is fixed, the \marcin{threshold} is found by the same closed-form LapSum scan \marcin{now restricted to the middle interval.} Supplementary Material gives the statement and the proof.

\vspace{2mm}
\noindent \textbf{Cost and Validation.}\hspace{0.3cm}
The result is that the task of sorting the whole vector reduces to an inner sort over the uncertain boundary band. Because the expected rank-width of the bracket is $\mathcal{O}(N/\sqrt K)$, balancing the $\mathcal{O}(K\log K)$ sample sort cost with that of the middle band yields $K\approx N^{2/3}$ and $|M|\approx N^{2/3}$. The optional certification pass evaluates $\Lap(a_L)$ and $\Lap(a_R)$ exactly and checks $\Lap(a_R)\le k\le\Lap(a_L)$. In case of failure, the bracket is widened iteratively. After a small number of failures, the algorithm falls back to the trivial global bounds. \marcin{Thus Fast LapSum's hybrid design takes} the optimistic path of an approximate solver, while the verified path guarantees exactness at the cost of a single auxiliary forward pass. \marcin{This is what makes Fast LapSum a practical million-scale primitive: the exact scan is preserved, but its cost is confined to the bracketed band.}

\begin{table}[t]
\centering
\small
\begin{tabular}{@{}c@{\qquad}c@{\qquad}c@{\qquad}c@{}}
\toprule
$N$ & Full Sort (ms) $\downarrow$ & Bracket (ms) $\downarrow$ & Speedup $\uparrow$ \\
\midrule
$1{\times}10^6$ & \textbf{0.41} & 0.64 & $0.6\times$ \\
$3{\times}10^6$ & 1.71 & \textbf{0.60} & $2.9\times$ \\
$1{\times}10^7$ & 6.43 & \textbf{1.15} & $5.6\times$ \\
$3{\times}10^7$ & 20.1 & \textbf{2.55} & $7.9\times$ \\
$1{\times}10^8$ & 99.0 & \textbf{5.23} & $19\times$ \\
\bottomrule
\end{tabular}
\caption{Large-$N$ summary ($B{=}1$, $k{=}N/16$, $\alpha{=}-1$, RTX~5060).
``Full sort'' is the compiled $\mathcal{O}(N)$ post-sort branch (the path plotted
below the crossover in~\cref{fig:baseline-plot}); ``bracket'' is the sampled
fast path at $z{=}5.16$.  The operator dispatches to whichever
branch is cheaper (\textbf{bold}): full sort below $N{\approx}1.5{\times}10^6$, the
bracket above.  The bracket's advantage is thus realised only past the crossover,
growing to $19\times$ at $10^8$ while keeping the dispatched solver under $5.5$\,ms
where the full-sort branch already needs ${\sim}100$\,ms.}
\label{tab:ks-main}
\end{table}

The same dispatch extends across three further orders of magnitude in $N$ (\cref{tab:ks-main}). \marcin{The official LapSum release~\cite{lapsum2025} follows the full-sort $\mathcal{O}(N\log N)$ pattern and reaches ${\approx}2.9$\,s at $N{=}3.2{\times}10^7$. Fast LapSum's compiled full-sort branch runs the same size in ${\sim}20$\,ms, a ${\sim}140\times$ speedup from reducing the post-sort work to $\mathcal{O}(N)$. The bracketed branch sorts only the $\mathcal{O}(N^{2/3})$ uncertain middle: it stays near half a millisecond through $3{\times}10^6$, holds at $1$--$2.5$\,ms out to $3{\times}10^7$, and reaches ${\approx}5$\,ms at $10^8$,} up to $\mathbf{1200\times}$ faster than the official release (full comparison in Supplementary Material). \section{Speed: \marcin{Fast LapSum} at GPU-Primitive Cost}
\label{sec:fast-exp}

Firstly, we ask whether the exact differentiable top-$k$ can be computed quickly enough for practical use, and the answer is positive. Once the budget equation is expressed as a scan and the final CDF is fused, the operator reduces from a bisection loop to a small number of GPU-native passes.

\marcin{We benchmark Fast LapSum's exact full-sort branch.} All wall-clock times are averaged over 20 iterations after 3 warm-up rounds on a NVIDIA RTX~5060 Laptop (Blackwell SM120), using PyTorch 2.11 and CUDA 12.8. The baseline follows the pattern of the original LapSum implementation~\cite{lapsum2025}, in which one CUDA thread per query runs 60 bisection steps, each recomputing the full budget sum.

\marcin{Against this baseline, Fast LapSum} is nearly two orders of magnitude faster (at $B{=}256$, $n{=}4096$ it runs in $0.258$\,ms versus $22.86$\,ms) while reproducing the baseline mask to $1.5{\times}10^{-5}$ in FP32. The efficiency comes from formulating the budget curve as a single cooperative GPU scan fused directly with the Laplace CDF evaluation. Thus, post-sorting execution reduces to a constant sequence of GPU-native passes rather than an iterative bisection loop. The per-stage kernel timings are reported in Supplementary Material, Sec.~4.

The analytical \marcin{vector-Jacobian product} was checked against centered finite differences in FP64. The relative error is below $3{\times}10^{-10}$ in the reported configurations (see Supplementary Material).

\begin{figure}[t]
\centering
\includegraphics[width=\columnwidth]{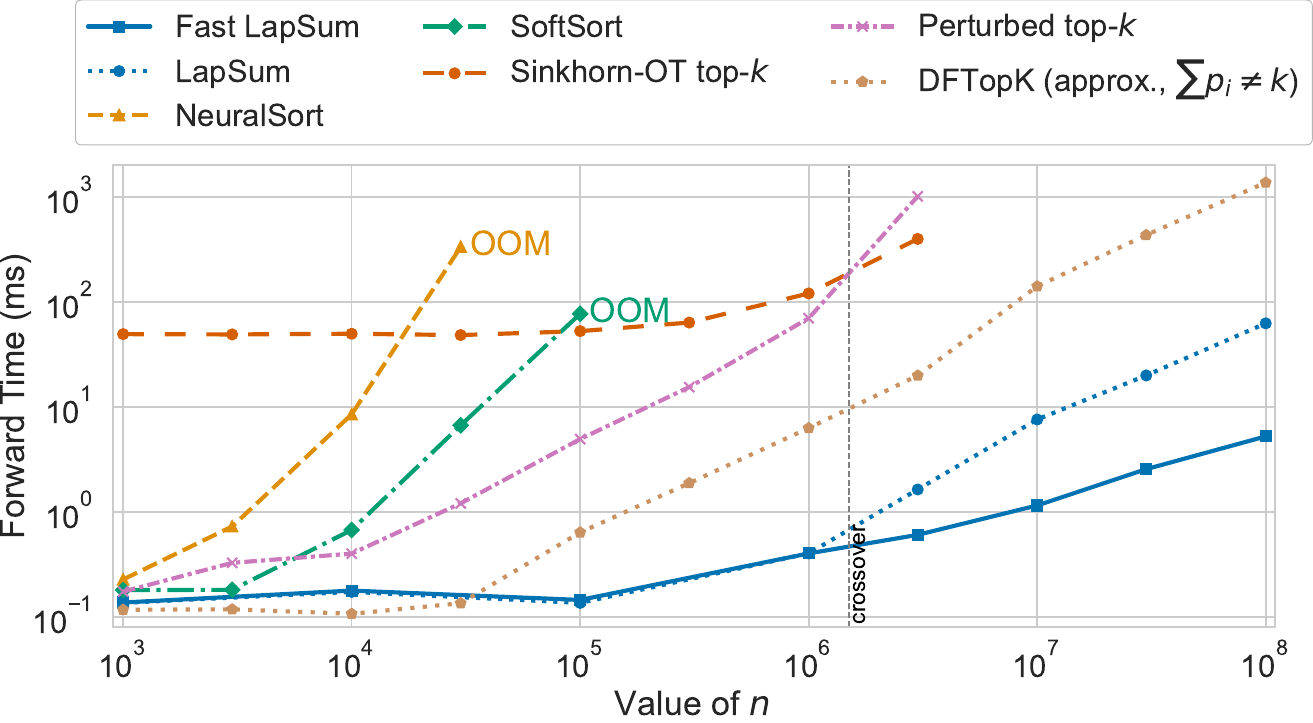}
\caption{Forward runtime against differentiable-ordering relaxations ($B{=}1$, $k{=}N/16$, RTX~5060). The Fast LapSum and dependency-free LapSum paths implement the same exact-budget operator and use FP32 arithmetic. Dense sort-matrix methods run out of memory, iterative baselines and DFTopK remain far slower at scale, while Fast LapSum stays in the low-millisecond band up to $n{=}10^8$.}
\label{fig:baseline-plot}
\vspace{-0.8\baselineskip}
\end{figure}

\vspace{2mm}
\noindent \textbf{Against Differentiable-Ordering Operators.}\hspace{0.3cm}
\Cref{fig:baseline-plot} positions Fast LapSum against prior differentiable-ordering relaxations. We show both implementations of the same operator, a fused Fast LapSum path and a dependency-free LapSum path. They keep the exact budget and operate in FP32. The version with brackets outperforms the full-sort branch for problems exceeding $10^6$ elements, reaching $5.2$\,ms at $10^8$ scores. Dense sorting relaxations run out of memory (OOM) by $n{\sim}10^5$--$3{\times}10^5$, Sinkhorn and perturbed top-$k$ become iterative bottlenecks, and DFTopK's two $k$-th value selections climb to $1.4$\,s at $10^8$. Fast LapSum is the only plotted operator that remains exact-budget and low-millisecond at the million scale.

\vspace{0mm}
\noindent \textbf{Budget Accuracy.}\hspace{0.3cm}
Several of the above operators are fast at certain scales, but except for speed, the procedure should use a proper budget-based selector, i.e. the realized budget $\sum_i p_i$ must equal $k$. \Cref{tab:budget} measures the error $|\sum_i p_i-k|/k$ for every operator presented. By explicitly solving for the threshold, Fast LapSum guarantees exact normalization up to FP32 machine precision for all $N$. In contrast, DFTopK relies on a fixed midpoint. At matched temperatures, it violates the budget equation with a relative error $2.46$, or a realized mass around $3.46k$. The exact-budget relaxations (NeuralSort, SoftSort, Sinkhorn-OT) either run out of memory or are two orders of magnitude slower. Fast LapSum is the only operator that is both exact and fast.

\begin{table}[t]
\centering
\small
\setlength{\tabcolsep}{4pt}
\resizebox{\columnwidth}{!}{\begin{tabular}{@{}l@{\;}ccccc@{}}
\toprule
\backslashbox{Operator}{Size} & $10^{3}$ & $10^{4}$ & $10^{5}$ & $10^{6}$ & $10^{7}$ \\
\midrule
\textbf{Fast LapSum (ours)} & $\mathbf{<10^{-5}}$ & $\mathbf{<10^{-5}}$ & $\mathbf{<10^{-5}}$ & $\mathbf{<10^{-5}}$ & $\mathbf{<10^{-5}}$ \\
DFTopK~\cite{dftopk2025} & $2.39$ & $2.46$ & $2.47$ & $2.46$ & $2.46$ \\
NeuralSort~\cite{neuralsort2019} & $<10^{-6}$ & $<10^{-6}$ & OOM & OOM & OOM \\
SoftSort~\cite{softsort2020} & $<10^{-6}$ & $<10^{-6}$ & $<10^{-6}$ & OOM & OOM \\
Sinkhorn-OT~\cite{xie2020softtopk} & $<10^{-5}$ & $<10^{-5}$ & $<10^{-5}$ & $<10^{-5}$ & $<10^{-5}$ \\
Perturbed~\cite{berthet2020perturbed} & $<10^{-2}$ & $10^{-2}$ & $10^{-2}$ & $10^{-2}$ & OOM \\
\bottomrule
\end{tabular}}
\caption{Realized budget accuracy $|\sum_i p_i-k|/k$ (lower is better) for the operators in~\cref{fig:baseline-plot} ($k{=}N/16$, $\alpha{=}-1$, Gaussian scores; OOM on the $8$\,GB GPU). Fast LapSum solves the threshold exactly, yielding $\sum_i p_i=k$ in FP32 precision at every scale, whereas \textbf{DFTopK relaxes this constraint and reaches relative error $\approx2.46$ (mass $\approx3.46k$)}. NeuralSort and SoftSort run OOM before $10^5$, while Sinkhorn-OT remains exact but is two orders of magnitude slower. Among scalable high-performing operators, only Fast LapSum preserves the budget exactly.}
\label{tab:budget}
\end{table}
 
Comprehensive timings for forward and backward passes, as well as ablations, are reported in Supplementary Material. They exhibit the same behavior, because the backward pass evaluates the analytical \marcin{vector-Jacobian product} of~\cref{eq:vjp} and avoids the dense sorting overhead required by the relaxed formulations. 
\section{Experiments}
\label{sec:experiments}

\begin{figure*}[thbp]
\centering
\includegraphics[width=\textwidth]{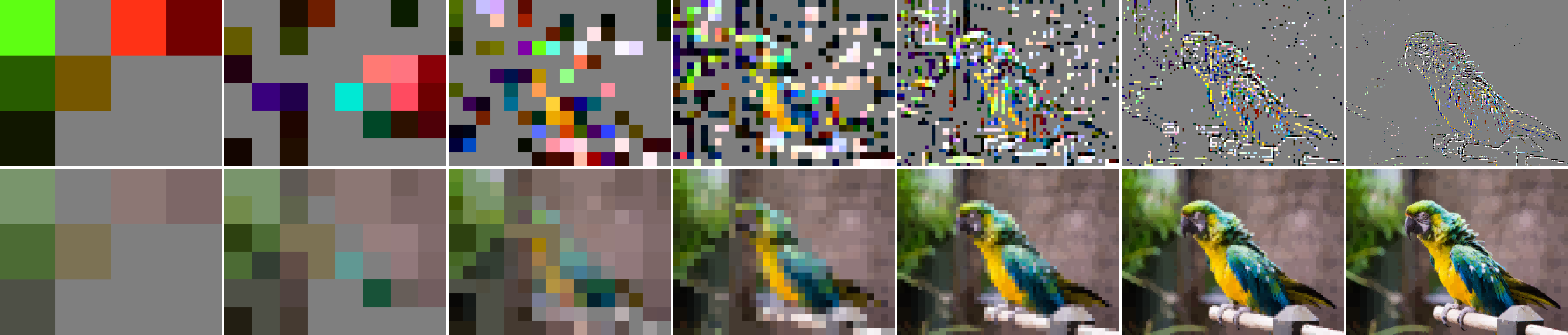}
\caption{The coder building the macaw scale by scale ($k{=}10^4$ atoms, $0.051$\,bpp,
$27.3$\,dB).  Columns are pyramid levels, coarse (left) to finer (right); \emph{top row:}
per-level code $D_\ell$ (grey\,=\,unused), \emph{bottom row:} running reconstruction
$R_\ell$.  The finest level is omitted here; the full vertical build-up over \emph{all}
levels, and the complete construction, are in Supplementary Material.}
\label{fig:coder}
\end{figure*}

\textbf{Million-Scale Application: \marcin{Image Coding.}}\hspace{0.3cm}
\label{sec:coder}
Before the substantive evaluation of Fast LapSum on the adversarial benchmark, we present an image coder, a compact visual demonstration that the exact-budget operator works inside a real, million-scale model.

The coder is small and fully differentiable, as its only selection primitive is the exact-budget soft top-$k$ run over millions of scores on every forward and backward pass. Rather than optimizing for downstream reconstruction (\marcin{no entropy stage and a trivial color model}), we isolate a more fundamental capability. We demonstrate that a single differentiable budget equation can manage coordinate \marcin{selection across inputs of over a million elements} in standard architectures.
The more detailed description is presented in Supplementary Material.

\Cref{fig:coder} shows the coder operation in consecutive steps. Each pyramid level contributes a sparse set of signed atoms (top row) whose running additive sum (bottom row) assembles the macaw from coarse to fine. On this image, $k=10^4$ atoms under a $B=1.6{\times}10^5$ bit budget reach $0.051$\,bpp (true combinatorial rate) at $27.3$\,dB. However, the empirical rate is not our primary objective. Rather, \marcin{it demonstrates that Fast LapSum, operating over $4.2$ million scores,} can act as an end-to-end routing bottleneck. Operating at a computational cost lower than the surrounding arithmetic, it successfully drives an unconstrained, from-scratch coder toward a principled subband allocation. A competitive codec would add an entropy coder and a color transform (both deliberately absent), but the experiment isolates the operator working, differentiably, at the scale the data actually has. The complete construction (over-complete dictionary, differentiable addressing rate, and optimization) is presented in Supplementary Material.

\vspace{2mm}
\noindent \textbf{\marcin{Exact-Budget Adversarial Examples.}}\hspace{0.3cm}
\label{sec:adv}
Crucially, exact-budget selection provides a large practical performance margin alongside its cleaner objective. The experiment requires a genuinely discrete decision, namely, which pixels should change, and by how much, under a budget on the total change injected into a native-resolution image. This is the regime where an exact differentiable selector matters, since the attack must choose a support, send a gradient to millions of candidate pixels, and still keep the budget explicit instead of hiding sparsity in a penalty.

\begin{figure}[t]
\centering
\includegraphics[width=\linewidth]{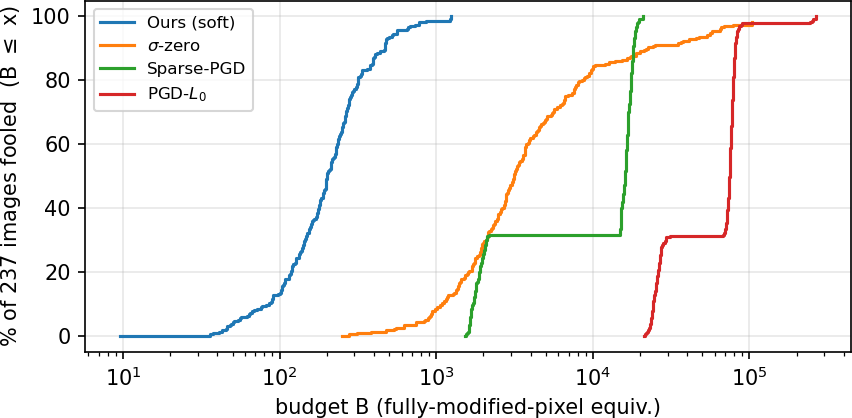}
\caption{Budget--success curve on the $237$-image confident subset. Our attack reaches
$99\%$ by $B\!\leq\!10^3$; baselines need $10^3$--$10^5$.}
\label{fig:budget-success}
\end{figure}

\begin{figure*}[t]
\centering
\begin{minipage}[c]{0.23\textwidth}
    \centering
    \small
    Image
\end{minipage}\begin{minipage}[c]{0.19\textwidth}
    \centering
    \small
    Eye Zoom
\end{minipage}\begin{minipage}[c]{0.19\textwidth}
    \centering
    \small
    Fast LapSum (ours)
\end{minipage}\begin{minipage}[c]{0.19\textwidth}
    \centering
    \small
    Sparse-PGD
\end{minipage}\begin{minipage}[c]{0.19\textwidth}
    \centering
    \small
    $\sigma$-zero
\end{minipage}\\
\includegraphics[width=\textwidth]{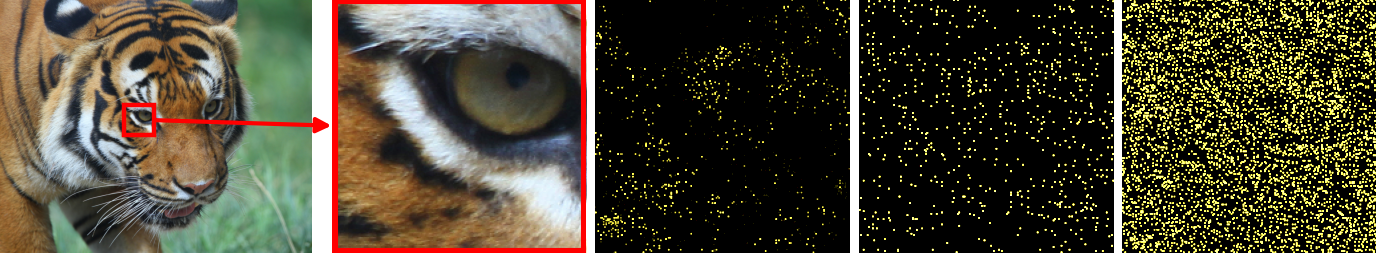}
\caption{Where each attack spends its budget, zoomed to the tiger's eye: the tiger (left) with the eye boxed, that patch enlarged (red border), then the per-pixel change $e_i=\tfrac13\sum_c|x'_{i,c}-I_{i,c}|$ under each attack, \emph{all three on one shared absolute scale}. Our \emph{soft} attack is the faintest since it moves each pixel the least while injecting a tiny \emph{total} change ($B\!\approx\!560$, about $0.02\%$ of the image) spread broadly; $\sigma$-zero and Sparse-PGD change their pixels far more strongly, at far larger budgets.}
\label{fig:advmasks}
\end{figure*}

We attack a frozen ConvNeXt-V2 classifier~\cite{convnextv2} on DIV2K at native resolution, using the same confident subset for every method, defined as images whose clean top-1 softmax probability is at least $0.8$ ($237$ of $900$). Low-confidence images are discarded because near-boundary predictions can change by a single pixel for any method. Full protocol details, paired tables, fixed-budget success rates and hard-support deployment are available in Supplementary Material.

\begin{table}[t]
\centering
\small
\begin{tabular}{lrr}
\toprule
Method & Fool\% $\uparrow$ & Median $B$ $\downarrow$ \\
\midrule
\textbf{Fast LapSum (ours)} & \textbf{100} & \textbf{199.8} \\
$\sigma$-zero \cite{sigmazero} & 98 & 3101.0 \\
Sparse-PGD \cite{zhong2025sparse} & 100 & 16134.2 \\
PGD-$L_0$ \cite{croce2019sparse} & 100 & 74553.3 \\
SparseFool \cite{sparsefool} & 0 & -- \\
\bottomrule
\end{tabular}
\caption{Sparse adversarial attack on the confident DIV2K subset.  $B$ is the total
injected change in fully-modified-pixel equivalents.}
\label{tab:attack-main}
\end{table}
 
For an image $I\in[0,1]^{3\times H\times W}$, we train the per-pixel selection logits $r$, a logit-space color field $z$, and the operator temperature $\alpha$, and decode
\begin{equation}
  x' = \bigl(1-\Lap_k(r)\bigr)\odot I \;+\; \Lap_k(r)\odot\sigma(z).
\end{equation} 
The LapSum mask selects the support, and $\sigma(z)$ supplies the replacement color. We initialize $z=\mathrm{logit}(I)$, so $x'=I$ at step zero, since the decode is a convex combination of valid colors, no clipping is needed. The attack learns both $\alpha$ and $k$, shrinking the support once the classifier is successfully attacked, through
\begin{equation}
  \mathcal{L} \;=\; \lambda\,\frac{k}{HW}
                 \;+\; \mathrm{softplus}_T\!\bigl(\log q - c\bigr),
\end{equation}
where $q$ is a differentiable order-statistic probability that the true class remains top-ranked and $c$ is the target log-confidence: the $\mathrm{softplus}_T$ hinge drives $\log q$ down to $c$, i.e.\ the true class wins with probability $q\le e^{c}\approx0.14$ (we use $c=-2$). 
The hinge term disappears once the image is successfully attacked, so optimization then reduces the budget. Thus, a single run finds its own sparsity level without a hand-designed schedule or per-image tuning, performing one exact-budget solver evaluation over millions of scores at each step.

We measure the total injected change
{\setlength{\abovedisplayskip}{5pt}
\setlength{\belowdisplayskip}{0pt}
\begin{equation}
  B=\sum_i e_i,\qquad
  e_i=\tfrac13\sum_c |x'_{i,c}-I_{i,c}|\in[0,1],
\end{equation}}\noindent
so $B$ is measured in fully-modified-pixel equivalents. In the tiger example, the operator successfully attacks the classifier with roughly $600$ effective pixels out of $3.29\times10^6$ ($0.02\%$), and the selected support concentrates on class-defining texture instead of scattered noise (\cref{fig:advmasks}). A matched-budget dense control fails at around $5.9{\times}10^3$ effective pixels, about $10\times$ larger. The dense control and its limitations are in Supplementary Material.

On native-resolution DIV2K images, our soft exact-budget attack achieves a $100\%$ success rate against confidence predictions at a median budget of $B=199.8$, while the strongest differentiable baseline needs $3101$, and projection baselines require another order of magnitude more (\cref{tab:attack-main}). The full budget--success curve shows the same pattern. Our attack succeeds for almost the entire subset by $B\le10^3$, where the baselines remain near zero (\cref{fig:budget-success}).

Across the full confident subset, Fast LapSum is $15$--$275\times$ lower in median budget than the differentiable and projection baselines, and is strictly sparser on every jointly-attacked image. Within $B\le10^3$ it successfully attacks $99\%$ of the subset, where the strongest baseline reaches only $9\%$ (see Supplementary Material).

\section{Conclusion}
\marcin{We introduced Fast LapSum, a GPU-native, exact-budget differentiable top-$k$ operator. LapSum turns differentiable top-$k$ into a single budget equation; Fast LapSum makes that equation practical. After sorting, one stable prefix/suffix scan produces the budget curve, from which one lookup and one closed-form root recover the threshold; the backward pass is an element-wise VJP plus a single reduction. With a cooperative CUDA block and a fused CDF evaluation, Fast LapSum's full-sort branch is nearly $100\times$ faster than the per-query bisection baseline.

For million-scale score vectors, Fast LapSum's bracketed branch uses a binomial order-statistic bracket to localize the threshold before sorting. The validated path keeps the operator exact, while the fast path runs in around $1$\,ms up to $10^7$ scores. On the reported results, it is over $20\times$ faster than the concurrent DFTopK at $10^7$ scores and up to $1200\times$ faster than the official LapSum release at $3.2{\times}10^7$.

In summary, differentiable top-$k$ need not be dense, approximate, or sorting-dominated; with a suitable algebraic and probabilistic view, it becomes a fast, exact-budget GPU primitive. The adversarial attack and the differentiable image coder show the same primitive operating inside million-coordinate learning loops, where the selector is no longer the bottleneck.

\vspace{2mm}
\noindent 
\textbf{Limitations.}\hspace{0.3cm}
Fast LapSum is specific to the Laplace kernel, whose piecewise-exponential CDF is what makes the per-interval root closed-form; the construction does not transfer unchanged to other smoothing kernels. Its linear scaling is also expected, not worst-case: the probabilistic bracket localizes the threshold with high probability, and although the validated path restores exactness by widening the bracket, a pathological score distribution can trigger extra forward passes and erode the expected-linear runtime.
} {
    \small
    \bibliographystyle{ieeenat_fullname}
    \bibliography{main}
}

\onecolumn
\appendix
\section*{Supplementary Material Organization}
\Cref{app:scan,app:bracketed,app:binom,app:implementation} detail the exact algebra, bracketed solver, and implementation behind the operator. \Cref{app:speed-extra,app:tables,app:dftopk} provide additional speed and ablation results, complete experimental tables, and an extended comparison with DFTopK. Finally, \Cref{app:adv-details,app:ste,app:coder,app:dense} provide further details on the million-scale applications presented in the main paper.

\section{Exact LapSum Algebra}
\label{app:scan}

For completeness, we state the operator in full.  The Laplace CDF used in LapSum is
\begin{equation}
\sigma(t)=
\begin{cases}
\tfrac12 e^t, & t\le 0,\\
1-\tfrac12 e^{-t}, & t>0.
\end{cases}
\end{equation}
The threshold $b$ is the unique solution of
\begin{equation}
\Lap(b)=\sum_{i=1}^n \sigma\left(\frac{b-r_i}{\alpha}\right)=k.
\end{equation}
For $\alpha<0$, the largest scores receive the highest probabilities.  After sorting in descending order, the budget curve $\Lap(r_{(i)})$ is materialized by the prefix/suffix exponential scans of the original LapSum~\cite{lapsum2025} (a three-group split of the sum at $b=r_{(i)}$ into $j<i$, $j=i$, $j>i$), and inside an interval the threshold equation is quadratic, so the root is closed form. The only implementation-relevant point is that the affine recurrence $x_i=w_i x_{i-1}+1$ is associative under the $2\times2$ transition matrix $M_i=\big(\begin{smallmatrix}w_i&1\\0&1\end{smallmatrix}\big)$, so a standard block scan computes all node values on the GPU in one cooperative pass.

\section{Exact Solve Inside a Bracket}
\label{app:bracketed}

Assume $b\in[a_L,a_R)$. Partition the scores into
\begin{equation}
L=\{i:r_i<a_L\},\qquad M=\{i:a_L\le r_i<a_R\},\qquad
R=\{i:r_i\ge a_R\}.
\end{equation}
For every $b$ in the bracket, the contribution of $L$ and $R$ is
\begin{equation}
\Lap(b)=|R|+\tfrac12\tilde S_L e^{(b-a_L)/\alpha}
-\tfrac12\tilde S_R e^{(a_R-b)/\alpha}
+\sum_{i\in M}\sigma((b-r_i)/\alpha),
\end{equation}
with $\tilde S_L,\tilde S_R$ defined in Eq. (7) in the main paper. A single forward pass computes the tail sums, $|R|$, and gathers the middle set.  The inner fast scan is then applied only to the sorted $M$, with the two bracket terms added to the node values and the interval root.  Boundary cases where $b$ lies above all elements of $M$, below all elements of $M$, or $M$ is empty, reduce to the same two-exponential equation.

\section{Binomial Order-Statistic Bracket}
\label{app:binom}

\subsection*{The Threshold Is a Kernel-Noised Quantile}
For $\alpha<0$ the Laplace CDF gives $\sigma\left(\frac{b-r_i}{\alpha}\right)=\Pr(r_i+\varepsilon\ge b)$ with $\varepsilon\sim\mathrm{Laplace}(0,|\alpha|)$, so
\begin{equation}
\label{eq:b-as-quantile}
\Lap(b)=\sum_i\sigma\!\left(\frac{b-r_i}{\alpha}\right)=N\,\Pr(Y\ge b),
Y=r_I+\varepsilon,\ \ I\sim\mathrm{Unif}\{1,\dots,N\}.
\end{equation}
Setting $\Lap(b)=k$ gives $\Pr(Y<b)=(N-k)/N=:q$, i.e.\ the threshold $b$ is exactly the $q$-quantile of the kernel-noised mixture $Y$.

\subsection*{A Binomial Bracket from Two Order Statistics}
Draw $K$ indices uniformly with replacement and form the kernel-noised sample $y_s=r_{i_s}+\varepsilon_s$ ($\varepsilon_s$ i.i.d.), sorted ascending $y_{(1)}\le\dots\le y_{(K)}$.  By Eq.~\eqref{eq:b-as-quantile} each draw satisfies $y_s\le b$ independently with probability $q$, so $C=\#\{s:y_s\le b\}$ is exactly binomial and the order statistics bracket $b$.

\begin{theorem}
\label{thm:binom-bracket}
Let $C\sim\mathrm{Binomial}(K,q)$ with $q=(N-k)/N$.  For integers $1\le j<l\le K$,
\begin{equation}
\Pr\big(y_{(j)}\le b< y_{(l)}\big)=\Pr\big(j\le C\le l-1\big).
\end{equation}
\end{theorem}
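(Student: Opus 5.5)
The plan is to reduce the event $\{y_{(j)}\le b<y_{(l)}\}$ to an event about the count $C=\#\{s:y_s\le b\}$, and then read off its probability from the binomial law.

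\emph{Step 1: the count is binomial.} The draws $y_s=r_{i_s}+\varepsilon_s$ are i.i.d. copies of $Y$ from Eq.~\eqref{eq:b-as-quantile}. By that equation $\Pr(Y<b)=q$. The Laplace mixture $Y$ has a continuous density, so $\Pr(Y=b)=0$ and hence $\Pr(Y\le b)=q$ as well. The indicators $\mathbf 1[y_s\le b]$ are therefore i.i.d. Bernoulli$(q)$, and $C\sim\mathrm{Binomial}(K,q)$.

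\emph{Step 2: order statistics versus counts.} Since the $y_s$ are sorted ascending, for any $m$ we have the deterministic equivalence $y_{(m)}\le b \iff C\ge m$. Indeed, at least $m$ samples lie at or below $b$ exactly when the $m$-th smallest does. Applied with $m=j$, this gives $\{y_{(j)}\le b\}=\{C\ge j\}$. Applied with $m=l$, it gives $\{y_{(l)}>b\}=\{C<l\}=\{C\le l-1\}$.

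\emph{Step 3: combine.} Intersecting the two events from Step 2 yields $\{y_{(j)}\le b<y_{(l)}\}=\{j\le C\le l-1\}$ as sets. Taking probabilities gives the claim, with $\Pr(j\le C\le l-1)=\sum_{c=j}^{l-1}\binom{K}{c}q^c(1-q)^{K-c}$. The hypothesis $j<l$ is what makes this range nonempty.

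The only delicate point is the tie/strictness issue in Step 1. The quantile identity is stated with $\Pr(Y<b)$, while $C$ counts $y_s\le b$. Continuity of the Laplace noise closes this gap, but the argument should say explicitly that $\Pr(Y=b)=0$. I would also remark that the result relies on i.i.d. sampling with replacement, not on the systematic sampling used in the fast implementation. Finally, the normal approximation to $C$ gives the coverage bound $\ge 1-2\Phi(-z)$ quoted in Eq.~\eqref{eq:binom-bracket}, as a corollary.
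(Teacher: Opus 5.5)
Your proposal is correct and follows the paper's proof essentially step for step: the indicators $\mathbf 1[y_s\le b]$ are i.i.d.\ Bernoulli$(q)$, then $y_{(j)}\le b\iff C\ge j$ and $b<y_{(l)}\iff C\le l-1$, and intersecting these gives the claim. Your explicit remark that the Laplace noise makes $Y$ continuous, so $\Pr(Y=b)=0$ and $\Pr(Y\le b)=\Pr(Y<b)=q$, fills a small step that the paper passes over without comment.
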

\begin{proof}
The indicators $\mathbf 1[y_s\le b]$ are i.i.d.\ $\mathrm{Bernoulli}(q)$ by~\Cref{eq:b-as-quantile}, so $C\sim\mathrm{Binomial}(K,q)$.  Now $y_{(j)}\le b$ iff at least $j$ draws fall at or below $b$, i.e.\ $C\ge j$, and $b<y_{(l)}$ iff fewer than $l$ do, i.e.\ $C\le l-1$.  Intersecting the two events gives the claim.
\end{proof}

A normal approximation to the binomial tails yields the closed-form indices
\begin{equation}
j=\big\lfloor Kq-z\sqrt{Kq(1-q)}\big\rfloor,\qquad
l=\big\lceil Kq+z\sqrt{Kq(1-q)}\big\rceil,
\end{equation}
for which $\Pr(a_L\le b<a_R)\ge 1-2\Phi(-z)$ ($\ge0.99$ already at $z=2.58$, though we use $z=5.16$ in practice for margin under the systematic sampler).  The bracket is read off directly as two order statistics of the noised sample, with no auxiliary LapSum solves.  Balancing the $K\log K$ sample sort against the inner sort on $M$ gives $K\approx N^{2/3}$ and $|M|\approx N^{2/3}$. A coarser $K\approx2\sqrt N$ sample is also available when a wider bracket is acceptable.

\begin{algorithm}[t]
\caption{Sample-bracketed soft top-$k$}
\label{alg:ks}
\begin{algorithmic}[1]
\Require scores $r\in\mathbb{R}^n$, budget $k$, temperature $\alpha<0$, $z$
\State draw $K{=}\lceil n^{2/3}\rceil$ kernel-noised samples
       $y_s=r_{i_s}+\varepsilon_s$ and sort them ascending
\State $j\gets\lfloor Kq{-}z\sqrt{Kq(1{-}q)}\rfloor$,\ \
       $l\gets\lceil Kq{+}z\sqrt{Kq(1{-}q)}\rceil$,\ \ $q{=}(n{-}k)/n$
\State $a_L\gets y_{(j)}$,\ \ $a_R\gets y_{(l)}$
       \Comment{two order statistics, no sample solve}
\If{$\Lap(a_R)>k$ or $\Lap(a_L)<k$}
       \Comment{binomial event failed, cost $\mathcal{O}(n)$}
\State increase $z$ and repeat, or fall back to $[r_{\min},r_{\max}]$
\EndIf
\State pass $r$ once to compute $\tilde S_L,\tilde S_R,|R|$ and gather $M$
\State sort $M$ and apply the bracketed scan solve
\State \Return probabilities $p_i=\sigma\left(\frac{b-r_i}{\alpha}\right)$
\end{algorithmic}
\end{algorithm}

\subsection*{Robustness of the Unvalidated Bracket}
\Cref{tab:robustness} stress-tests the fast (unvalidated) bracket across score distributions.  Coverage stays at or above the predicted level, and the budget error remains at FP32 precision, so the validated fallback only rarely has to widen the window.

\begin{table}[t]
\centering
\caption{Robustness of the fast (unvalidated) bracket across score distributions ($z{=}5.16$, $k{=}n/16$, $\alpha{=}-1$).  Coverage is the fraction of trials whose certified threshold falls in the bracket, fallback is how often the validated path would widen, and budget error is $|\sum_i p_i-k|/k$ for the unvalidated forward.  Worst case over $n\in\{10^5,10^6,10^7\}$.}
\label{tab:robustness}
\small
\setlength{\tabcolsep}{8pt}
\begin{tabular}{lccc}
\toprule
Score Distribution & Coverage (\%) & Fallback (\%) & Budget Err $|\Delta|/k$ \\
\midrule
Gaussian                         & 93.3 & 6.7 & 5.9e-06 \\
Student-$t_2$ (Heavy Tail)       & 96.7 & 3.3 & 1.7e-05 \\
Log-Normal (Skew)                & 100.0 & 0.0 & 2.9e-05 \\
Cauchy (No Variance)             & 100.0 & 0.0 & 1.6e-05 \\
3-Cluster Mixture                & 100.0 & 0.0 & 2.9e-06 \\
16-Level (Many Ties)             & 100.0 & 0.0 & 1.4e-06 \\
\bottomrule
\end{tabular}
\end{table}

\subsection*{Expected Linear Runtime}
\label{app:linear}

We now show that, while preserving the exact threshold, the bracketed solver runs in
expected linear time. Recall the dispatch of \Cref{alg:ks}. One attempt draws
and sorts the $K{=}\lceil N^{2/3}\rceil$ kernel-noised sample, reads the bracket
$[a_L,a_R)$ off two order statistics, streams the full vector once to form the tail sums
$\tilde S_L,\tilde S_R$ and gather the middle set $M$, certifies the bracket by evaluating
$\Lap(a_L),\Lap(a_R)$, and solves exactly on the sorted $M$. If the certificate
$\Lap(a_R)\le k\le\Lap(a_L)$ fails, the sign of $\Lap(a)-k$ identifies the side on which
$b$ lies, and only that side is doubled: if $\Lap(a_R)>k$ (so $b\ge a_R$) we set
$a_L\!\leftarrow\!a_R$, $a_R\!\leftarrow\!a_R+2(a_R-a_L)$ and retry, and symmetrically when
$\Lap(a_L)<k$. After a constant number of failures the solver falls back to the trivial
bracket $[r_{\min},r_{\max}]$ and the exact $\mathcal{O}(N)$ scan, so it is never slower
than the full-sort path.

\paragraph{Cost of one attempt.}
Sorting the sample and the middle set both cost $\mathcal{O}(N^{2/3}\log N)$, while the
single $\mathcal{O}(N)$ streaming pass forms the tail sums, gathers $M$, and evaluates
$\Lap(a_L),\Lap(a_R)$. Hence one attempt costs
\begin{equation}
\label{eq:attempt-cost}
\mathcal{O}(N)+\mathcal{O}(N^{2/3}\log N)=\mathcal{O}(N),
\end{equation}
dominated by the stream. Crucially there is no $\mathcal{O}(N\log N)$ term: only the
$\mathcal{O}(N^{2/3})$-size sample and middle set are ever sorted.

\paragraph{Failure probability of one bracket.}
By \Cref{thm:binom-bracket}, the bracket misses iff the count $C\sim\mathrm{Binomial}(K,q)$
leaves $[j,l-1]$, i.e.\ deviates from its mean $Kq$ by more than $z\sqrt{Kq(1-q)}$. A
Bernstein bound on this binomial tail gives the non-asymptotic estimate
\begin{equation}
\label{eq:fail-bound}
\Pr\big(b\notin[a_L,a_R)\big)\;\le\;2\exp\!\Big(-\tfrac12 z^2\,(1-o(1))\Big),
\end{equation}
and, since the Berry--Esseen error of the normal approximation to the standardized binomial
is $\mathcal{O}(K^{-1/2})=\mathcal{O}(N^{-1/3})$, the sharp value is
$\Pr(b\notin[a_L,a_R))=2\Phi(-z)+\mathcal{O}(N^{-1/3})$. At the operating point $z{=}5.16$
this is ${\approx}2.5\times10^{-7}$: the first bracket already contains $b$ with
overwhelming probability, which is why the validated path almost never widens. (We use
Bernstein rather than Hoeffding because the relevant quantile $q{=}(N{-}k)/N$ is typically
far from $\tfrac12$, where Hoeffding's $2\exp(-2z^2q(1-q))$ is loose.)

\begin{proposition}[Expected linear time]
\label{prop:linear}
Assume the kernel-noised mixture $Y=r_I+\varepsilon$ has a density bounded away from $0$ and
$\infty$ in a neighbourhood of the threshold $b$. Then the bracketed solver of
\Cref{alg:ks} with one-sided doubling runs in expected time $\E[T]=\mathcal{O}(N)$,
dominated by a single streaming pass.
\end{proposition}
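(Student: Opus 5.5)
We decompose the running time as $T=\sum_{m\ge1}T_m\,\mathbf 1[\text{attempt } m \text{ is executed}]+T_{\mathrm{fb}}\,\mathbf 1[\text{fallback}]$. Here $T_m$ is the cost of the $m$-th bracketed attempt and $T_{\mathrm{fb}}$ is the cost of the final trivial-bracket scan.

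\textbf{Cost of each attempt.} By \eqref{eq:attempt-cost}, every attempt costs $\mathcal{O}(N)+\mathcal{O}(K\log K)+\mathcal{O}(|M|\log|M|)$. The first two terms are deterministically $\mathcal{O}(N)$. The only random, potentially superlinear piece is the inner sort of $M$, so the first task is to bound $\E[|M|\log|M|]$. The fallback is a full sort plus scan, $\mathcal{O}(N\log N)$. Since there are at most a constant number $c$ of doubling attempts before fallback, it suffices to show two things:
\begin{itemize}
\item[(i)] $\E[|M_m|\log|M_m|]=\mathcal{O}(N)$ for each attempt $m\le c$;
\item[(ii)] $\Pr(\text{fallback})\cdot N\log N=\mathcal{O}(N)$.
\end{itemize}

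\textbf{Step (ii).} Reaching the fallback requires at least the first bracket to fail. By \Cref{thm:binom-bracket} and \eqref{eq:fail-bound}, this has probability at most $2\Phi(-z)+\mathcal{O}(N^{-1/3})$. With $z$ fixed, this gives a contribution of $\mathcal{O}(N^{2/3}\log N)+\mathcal{O}(\Phi(-z)N\log N)$, which is not $\mathcal{O}(N)$ as stated.

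I would therefore take $z=z_N$ growing like $\sqrt{2\log\log N}$ (or larger). Then $2e^{-z^2/2}\le 1/\log N$ by the Bernstein bound. With that choice, the Berry--Esseen term should be replaced by the Bernstein tail itself, which needs no normal approximation. Alternatively, I would observe that the sorting cost is at most $|M|\log|M|$ with $|M|\le N$ anyway, and bound the fallback contribution via $\Pr(\text{fallback})\le e^{-\Omega(z^2)}$ in the same way.

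\textbf{Step (i): the main obstacle.} We must translate the rank-width of the bracket into the number of \emph{data} scores falling in $[a_L,a_R)$. I would proceed as follows.
\begin{enumerate}
\item Let $F$ be the CDF of $Y$. The indices $j,l$ satisfy $l-j=\mathcal{O}(z\sqrt K)$. By the DKW inequality, $F(a_R)-F(a_L)=\mathcal{O}(z/\sqrt K)$ with probability $1-e^{-\Omega(z^2)}$, and this bound holds in expectation as well.
\item The density-bounded-away-from-zero hypothesis near $b$ converts this to a length bound $a_R-a_L=\mathcal{O}(z/\sqrt K)$.
\item Count data points in an interval of length $h$. The mixture density is $f_Y(y)=\frac1N\sum_i \frac{1}{2t}e^{-|y-r_i|/t}$. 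The upper bound on $f_Y$ near $b$ gives $\#\{i:|r_i-b|\le h\}\le 2tN\,e^{(h+\delta)/t}\sup f_Y\cdot\mathcal{O}(h/t)$, because every such $r_i$ contributes at least $\frac{1}{2t}e^{-1}$ to $f_Y$ on an interval of length $\Theta(\min(h,t))$ near it. This yields $|M|=\mathcal{O}(N z/\sqrt K)=\mathcal{O}(zN^{2/3})$ on the good event.
\item On the bad event, use $|M|\le N$ weighted by $e^{-\Omega(z^2)}$.
\end{enumerate}
This gives $\E[|M|\log|M|]=\mathcal{O}(zN^{2/3}\log N)+o(N)=\mathcal{O}(N)$.

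Each one-sided doubling attempt multiplies the bracket length by at most $3$, so its width is a constant multiple of the first, and the same argument applies. The delicate point I expect is step 3: relating the Laplace-smoothed density of $Y$ to the empirical count of raw scores in a short window. Making this uniform, for windows shorter than $t$, using only local density bounds is where the argument needs care.
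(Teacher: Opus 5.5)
\textbf{The gap: item 3 of your Step (i).} This step fails, and the stated hypothesis cannot repair it. Evaluating $f_Y$ at a point near $b$ legitimately gives $\#\{i:|r_i-b|\le h\}\le 2tN e^{(h+\delta)/t}\sup f_Y=\mathcal{O}(N)$. The extra factor $\mathcal{O}(h/t)$ has no source. Integrating your observation that each such $r_i$ contributes $\gtrsim \frac{1}{2tN}$ to $f_Y$ over a window around $b$ only yields $n_h\lesssim N(h+t)\sup f_Y$, and for $h\ll t$ this is still of order $N$.

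In fact $f_Y(y)=\frac1N\sum_i\frac1{2t}e^{-|y-r_i|/t}\le\frac1{2t}$ for \emph{every} score vector. So the upper half of the hypothesis carries no information. The Laplace smoothing hides any clustering of raw scores at scales below $t$, which is exactly the scale $a_R-a_L=\mathcal{O}(zN^{-1/3})$ that matters. A concrete counterexample: take $t=1$, $k=N/2$, and distinct scores in $[0,1/N]$.
\begin{itemize}
\item Then $\tfrac12e^{-|y|-1}\le f_Y(y)\le\tfrac12$ uniformly in $N$, and $b\in[0,1/N]$.
\item With probability close to $1-2\Phi(-z)$, the bracket (of length $\asymp zN^{-1/3}\gg 1/N$) covers $[0,1/N]$.
\item So $M$ is the whole vector, and the ``inner'' sort costs $\Theta(N\log N)$.
\end{itemize}

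\textbf{What is missing, and how the paper compares.} You need an assumption on the raw scores themselves. For example, require $\#\{i:r_i\in I\}\le CN|I|$ for intervals $I$ near $b$ with $|I|\gtrsim N^{-1/3}$, i.e.\ a bounded empirical score density at the bracket scale. With that, your items 1--2 (DKW plus the lower bound on $f_Y$, which are fine) give $|M|=\mathcal{O}(zN^{2/3})$ directly. The paper's proof does not supply this either. It passes from the $\mathcal{O}(N/\sqrt K)$ rank-width of the bracket, which is a statement about $Y$-quantiles, straight to $|M_t|=\mathcal{O}(2^tN^{2/3})$. So your instinct that this is the delicate point is right, but it is a missing hypothesis, not a technicality.

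The rest of your bookkeeping takes a different route from the paper. The paper sums over uncapped one-sided doublings, asserting that each doubling doubles the standardized margin so that reach probabilities decay doubly exponentially. It never charges the $[r_{\min},r_{\max}]$ fallback, which it calls an $\mathcal{O}(N)$ scan although it requires a full sort. You cap the attempts and charge the fallback as a full sort. Your observation is correct: a fixed $z$ with a constant cap leaves a fallback probability that does not vanish as $N\to\infty$, hence a term of order $N\log N$. Taking $z=z_N\to\infty$ repairs that, but it analyzes a different algorithm from the fixed-$z$ solver in the statement.
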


\begin{proof}
Let $\rho_t$ be the probability that the bracket still misses after $t$ widenings, and say
the solver reaches attempt $t$ if every earlier bracket misses, so
$\Pr(\text{reach }t)=\prod_{s<t}\rho_s$. Doubling the missed side at least doubles the
standardized margin to $b$ on that side: under the bounded-density hypothesis the
probability mass between $a_R$ and $a_R+2(a_R-a_L)$ is proportional to the width, so the
rank-offset of the bracket endpoint from $Kq$ at least doubles. Hence the effective
deviation obeys $z_t\ge 2^{t}z_0$, and by~\Cref{eq:fail-bound},
$\rho_t\le 2\exp(-c\,4^{t}z_0^2)$ with $c=\tfrac12(1-o(1))$, a doubly exponential
decay. Therefore
\begin{equation}
\Pr(\text{reach }t)\le\prod_{s<t}2\exp(-c\,4^{s}z_0^2)
   =2^{t}\exp\!\Big(-c\,z_0^2\,\tfrac{4^{t}-1}{3}\Big).
\end{equation}
After $t$ doublings the bracket width has grown by $2^{t}$, so the middle set has size
$|M_t|=\mathcal{O}(2^{t}N^{2/3})$ and, by Eq.~\eqref{eq:attempt-cost}, attempt $t$ costs
$\mathrm{cost}(t)=c_1N+c_2\,2^{t}N^{2/3}\log N$. Hence
\begin{equation}
\E[T]=\sum_{t\ge0}\Pr(\text{reach }t)\,\mathrm{cost}(t)
   =c_1N\!\sum_{t\ge0}\Pr(\text{reach }t)
   +c_2N^{2/3}\log N\!\sum_{t\ge0}\Pr(\text{reach }t)\,2^{t}.
\end{equation}
Both sums are $\mathcal{O}(1)$. The first is $\E[\#\text{attempts}]\le\sum_t\rho_0^{\,t}=1/(1-\rho_0)$,
since $\rho_t\le\rho_0<1$. The second is bounded by
$\sum_t 4^{t}\exp(-c\,z_0^2(4^{t}-1)/3)<\infty$, as the doubly-exponential factor dominates
$4^{t}$. Thus $\E[T]=\mathcal{O}(N)+\mathcal{O}(N^{2/3}\log N)=\mathcal{O}(N)$.
\end{proof}

\paragraph{Remarks.}
(i)~The guarantee is on the expectation: a pathological score distribution can force
several widenings, but the doubly-exponential decay makes this astronomically unlikely, and
the $[r_{\min},r_{\max}]$ fallback caps the worst case at the exact full-sort cost. (ii)~At
the operating point $z{=}5.16$ the expected number of attempts is $1+\mathcal{O}(10^{-7})$,
so the solver effectively always succeeds on the first bracket and the runtime is the single
$\mathcal{O}(N)$ pass of Eq.~\eqref{eq:attempt-cost}. (iii)~\Cref{thm:binom-bracket} assumes the
$K$ draws are i.i.d.; the deployed fast path uses a systematic sampler (a strided view with
cached kernel noise), for which we verify coverage empirically (\Cref{tab:robustness}) and
keep the conservative $z{=}5.16$ margin, while the validated path restores exactness
regardless, so the expected-linear guarantee is unaffected.

\section{Implementation Details}
\label{app:implementation}

The implementation exposes the same autograd interface for all forward variants, each adding one kernel optimization (\Cref{tab:fast-main}). The first assigns one thread to each batch element and runs the scans sequentially inside that thread. The cooperative variant assigns one CUDA block to each batch element and computes the affine scan in parallel.  The fused variant passes both sorted and unsorted scores to the kernel, broadcasts the threshold through shared memory, and evaluates the final CDF for strided slices of the original input. The backward kernel uses Eq. (5) of the main paper and is shared by all variants.

For the bracketed pipeline, stage 2 draws the kernel-noised sample, sorts it, and reads off the two order statistics $a_L=y_{(j)},a_R=y_{(l)}$. Stage 1 then passes the original vector, gathers $M$, sorts only $M$, and reuses the exact scan on the middle set. The validated path adds one forward pass that checks $\Lap(a_R)\le k\le\Lap(a_L)$ and certifies the bracket before the exact inner solve. \Cref{tab:robustness} reports the empirical coverage, i.e. the fraction of trials whose threshold falls in the bracket, and how often there is a need to widen the range. Conditional on a valid bracket, the exact LapSum equation is used.

\begin{table}[t]
\centering
\small
\setlength{\tabcolsep}{6pt}
\begin{tabular}{rrrrr}
\toprule
$n$ & Baseline (ms) & Seq.\ Scan & Block Scan & Fused CDF \\
\midrule
1024 & 1.04 & 0.239 & 0.245 & 0.106 \\
4096 & 6.19 & 0.338 & 0.251 & 0.100 \\
8192 & 12.89 & 0.748 & 0.343 & 0.228 \\
16384 & 25.50 & 1.211 & 0.366 & 0.287 \\
\bottomrule
\end{tabular}
\caption{Per-stage forward runtime of the full-sort exact operator on the small-batch regime ($B=8$, $k=n/16$, $\alpha=-1$), against the bisection baseline. Each stage adds one kernel optimization, namely the sequential scan, the cooperative block scan, and the fused-CDF launch.}
\label{tab:fast-main}
\end{table}

\section{Additional Speed and Ablation Results}
\label{app:speed-extra}

The forward-pass timings reported in Section 5 of the main paper capture only the threshold computation.  The same trends persist once the backward pass is included. Fast LapSum uses the analytical vector-Jacobian product of Eq.~(5) of the main paper (one element-wise pass plus one reduction) whereas sorting and transport relaxations back-propagate through dense or iterative structures. At $n = 3{\times}10^4$, a single forward$+$backward already costs $1.8$\,s for NeuralSort and $0.44$\,s for SoftSort, while Fast LapSum stays near $1$\,ms across the tested range. Complete results are summarized in \Cref{fig:baseline-plot-fb}.

\begin{figure}[t]
\centering
\includegraphics[width=.7\columnwidth]{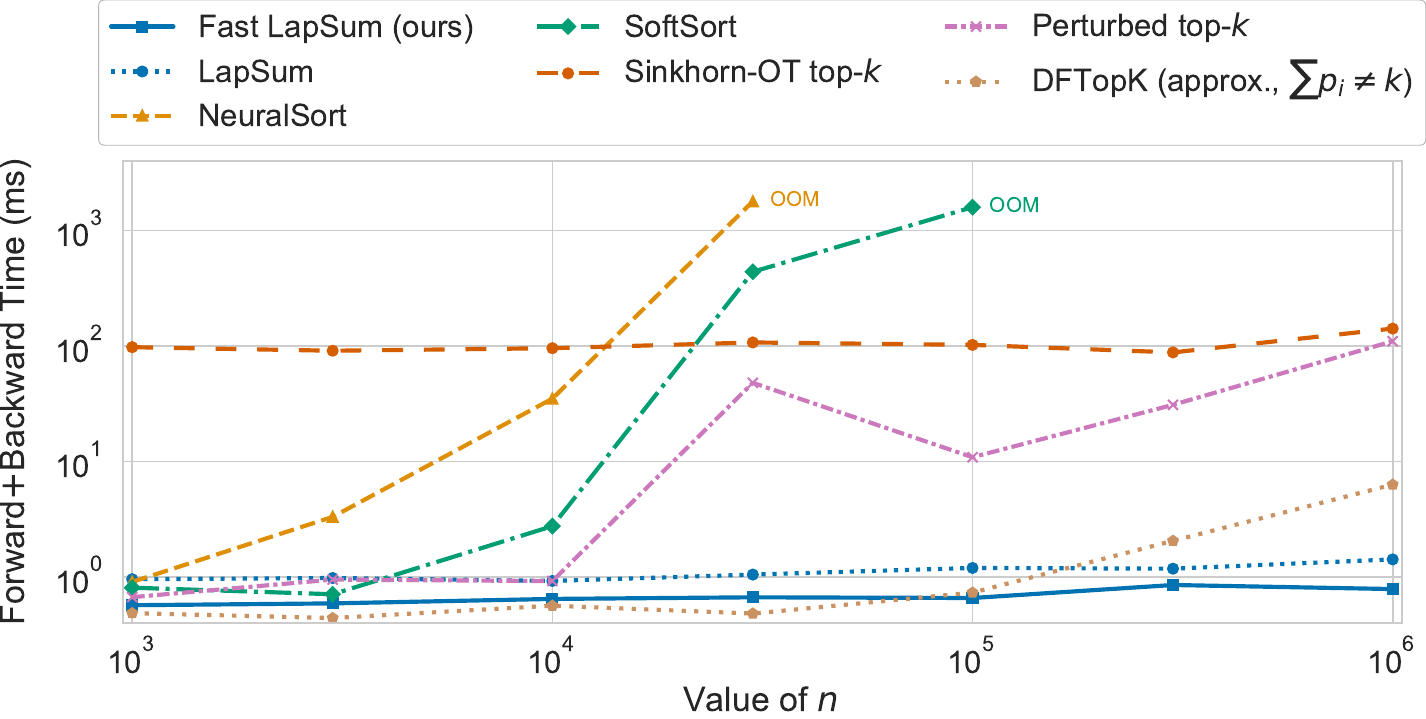}
\caption{Forward$+$backward time ($B{=}1$, $k{=}n/16$, RTX~5060 Laptop).}
\label{fig:baseline-plot-fb}
\end{figure}

\begin{table}[t]
\centering
\small
\setlength{\tabcolsep}{6pt}
\begin{tabular}{lrr}
\toprule
Variant & Forward (ms) & Speedup \\
\midrule
Baseline bisection & 7.29 & $1.0\times$ \\
One-thread scan & 0.442 & $16.5\times$ \\
Cooperative scan & 0.269 & $27.1\times$ \\
Fused CDF & 0.107 & $68.1\times$ \\
\bottomrule
\end{tabular}
\caption{Ablation at $B=16$, $n=4096$, $k=256$, $\alpha=-1$.}
\label{tab:ablation-main}
\end{table}

The ablation separates the contribution of each implementation decision. The single-scan formulation gives the first order-of-magnitude gain, the cooperative scan prevents under-occupation at larger vectors, and fusing the CDF removes four PyTorch element-wise launches.
The results are presented in \Cref{tab:ablation-main}. Note that each modification yields a successive speedup over the baseline bisection method.


\section{Complete Experimental Tables}
\label{app:tables}

This Supplementary Material collects the full numerical tables (see \Cref{tab:bench-fwd,tab:bench-fb,tab:gradcheck,tab:attack-paired,tab:attack-budget}) behind the main-text runtime and application summaries.

\begin{table}[t]
\centering
\caption{Forward wall-clock time on an RTX 5060 Laptop (Blackwell SM120), mean over 20 iterations (3 warm-up).  ``Base'' is the original
LapSum-style $\mathcal{O}(n\log n)$ kernel. ``V1'' is our prefix/suffix-scan
kernel (one thread per batch). ``V2'' switches to a block-cooperative scan
(one CUDA block per batch). ``V3'' additionally fuses the post-sort
Laplace-CDF computation into the same kernel, eliminating four PyTorch
launches.  ``Err V3'' is the max absolute disagreement with the baseline.}
\label{tab:bench-fwd}
\small
\setlength{\tabcolsep}{3pt}
\begin{tabular}{rrrrrrrrrrrr}
\toprule
$B$ & $n$ & $k$ & $\alpha$ & \multicolumn{4}{c}{Forward (ms)} & \multicolumn{3}{c}{Speedup vs Base} & Err V3 \\
\cmidrule(lr){5-8}\cmidrule(lr){9-11}
 & & & & Base & V1 & V2 & V3 & V1 & V2 & V3 & \\
\midrule
\multicolumn{12}{l}{Varying Problem Size $n$ ($B{=}8$, $k{=}n/16$, $\alpha{=}-1$)} \\
8 & 256 & 16 & -1.00 & 0.32 & 0.254 & 0.257 & 0.103 & 1.3$\times$ & 1.2$\times$ & 3.1$\times$ & 7.5e-07 \\
8 & 512 & 32 & -1.00 & 0.57 & 0.254 & 0.252 & 0.102 & 2.2$\times$ & 2.3$\times$ & 5.6$\times$ & 1.2e-06 \\
8 & 1024 & 64 & -1.00 & 1.04 & 0.239 & 0.245 & 0.106 & 4.3$\times$ & 4.2$\times$ & 9.8$\times$ & 2.1e-06 \\
8 & 2048 & 128 & -1.00 & 2.00 & 0.223 & 0.252 & 0.101 & 9.0$\times$ & 7.9$\times$ & 19.7$\times$ & 3.3e-06 \\
8 & 4096 & 256 & -1.00 & 6.19 & 0.338 & 0.251 & 0.100 & 18.3$\times$ & 24.6$\times$ & 62.1$\times$ & 4.9e-06 \\
8 & 8192 & 512 & -1.00 & 12.89 & 0.748 & 0.343 & 0.228 & 17.2$\times$ & 37.5$\times$ & 56.5$\times$ & 6.9e-06 \\
8 & 16384 & 1024 & -1.00 & 25.50 & 1.211 & 0.366 & 0.287 & 21.1$\times$ & 69.7$\times$ & 88.7$\times$ & 1.5e-05 \\
\midrule
\multicolumn{12}{l}{Varying Batch Size $B$ ($n{=}4096$, $k{=}256$, $\alpha{=}-1$)} \\
4 & 4096 & 256 & -1.00 & 3.24 & 0.272 & 0.245 & 0.097 & 11.9$\times$ & 13.2$\times$ & 33.3$\times$ & 4.4e-06 \\
16 & 4096 & 256 & -1.00 & 7.29 & 0.420 & 0.243 & 0.102 & 17.3$\times$ & 30.0$\times$ & 71.7$\times$ & 4.8e-06 \\
64 & 4096 & 256 & -1.00 & 8.91 & 1.114 & 0.229 & 0.099 & 8.0$\times$ & 38.8$\times$ & 90.4$\times$ & 5.7e-06 \\
256 & 4096 & 256 & -1.00 & 22.86 & 2.254 & 0.375 & 0.258 & 10.1$\times$ & 60.9$\times$ & 88.6$\times$ & 6.0e-06 \\
\midrule
\multicolumn{12}{l}{Varying Temperature $\alpha$ ($B{=}16$, $n{=}2048$, $k{=}128$)} \\
16 & 2048 & 128 & -0.25 & 3.67 & 0.286 & 0.249 & 0.121 & 12.9$\times$ & 14.8$\times$ & 30.4$\times$ & 2.1e-06 \\
16 & 2048 & 128 & -1.00 & 3.66 & 0.268 & 0.255 & 0.105 & 13.7$\times$ & 14.4$\times$ & 34.9$\times$ & 3.5e-06 \\
16 & 2048 & 128 & -4.00 & 3.64 & 0.273 & 0.257 & 0.101 & 13.4$\times$ & 14.2$\times$ & 36.1$\times$ & 1.2e-06 \\
\midrule
\multicolumn{12}{l}{Varying $k$ ($B{=}16$, $n{=}4096$, $\alpha{=}-1$)} \\
16 & 4096 & 16 & -1.00 & 7.29 & 0.443 & 0.252 & 0.107 & 16.5$\times$ & 29.0$\times$ & 68.0$\times$ & 1.4e-06 \\
16 & 4096 & 256 & -1.00 & 7.29 & 0.442 & 0.269 & 0.107 & 16.5$\times$ & 27.1$\times$ & 68.1$\times$ & 5.7e-06 \\
16 & 4096 & 2048 & -1.00 & 7.30 & 0.442 & 0.277 & 0.112 & 16.5$\times$ & 26.4$\times$ & 65.1$\times$ & 1.7e-06 \\
16 & 4096 & 3840 & -1.00 & 7.25 & 0.412 & 0.296 & 0.114 & 17.6$\times$ & 24.5$\times$ & 63.5$\times$ & 1.2e-05 \\
\bottomrule
\end{tabular}
\end{table}

\begin{table}[t]
\centering
\caption{End-to-end forward+backward wall-clock time (one VJP through
a random cotangent).  All implementations share the same analytical VJP
of \eqref{eq:vjp}, so the fwd+bwd speedup reflects the share of time
the forward dominates.  ``Grad Err'' is
$\max\lvert g_{v3}-g_{v1}\rvert$ over the input.}
\label{tab:bench-fb}
\small
\setlength{\tabcolsep}{4pt}
\begin{tabular}{rrrrrrrrrr}
\toprule
$B$ & $n$ & $k$ & $\alpha$ & \multicolumn{4}{c}{Fwd+Bwd (ms)} & V3 Speedup & Grad Err \\
\cmidrule(lr){5-8}
 & & & & Base & V1 & V2 & V3 & vs Base & V3 vs V1 \\
\midrule
\multicolumn{10}{l}{Varying Problem Size $n$ ($B{=}8$, $k{=}n/16$, $\alpha{=}-1$)} \\
8 & 256 & 16 & -1.00 & 0.57 & 0.618 & 0.604 & 0.402 & 1.4$\times$ & 4.2e-07 \\
8 & 512 & 32 & -1.00 & 0.67 & 0.561 & 0.568 & 0.396 & 1.7$\times$ & 7.2e-07 \\
8 & 1024 & 64 & -1.00 & 1.23 & 0.562 & 0.584 & 0.415 & 3.0$\times$ & 5.4e-07 \\
8 & 2048 & 128 & -1.00 & 2.13 & 0.622 & 0.594 & 0.398 & 5.4$\times$ & 9.5e-07 \\
8 & 4096 & 256 & -1.00 & 6.59 & 0.589 & 0.565 & 0.391 & 16.8$\times$ & 1.7e-06 \\
8 & 8192 & 512 & -1.00 & 13.16 & 0.841 & 0.685 & 0.558 & 23.6$\times$ & 1.8e-06 \\
8 & 16384 & 1024 & -1.00 & 25.94 & 1.273 & 0.671 & 0.629 & 41.3$\times$ & 3.9e-06 \\
\midrule
\multicolumn{10}{l}{Varying Batch Size $B$ ($n{=}4096$, $k{=}256$, $\alpha{=}-1$)} \\
4 & 4096 & 256 & -1.00 & 3.35 & 0.542 & 0.565 & 0.421 & 8.0$\times$ & 8.3e-07 \\
16 & 4096 & 256 & -1.00 & 7.53 & 0.635 & 0.598 & 0.400 & 18.8$\times$ & 2.3e-06 \\
64 & 4096 & 256 & -1.00 & 9.11 & 1.270 & 0.504 & 0.419 & 21.8$\times$ & 3.1e-06 \\
256 & 4096 & 256 & -1.00 & 23.10 & 2.499 & 0.634 & 0.458 & 50.4$\times$ & 3.1e-06 \\
\midrule
\multicolumn{10}{l}{Varying Temperature $\alpha$ ($B{=}16$, $n{=}2048$, $k{=}128$)} \\
16 & 2048 & 128 & -0.25 & 3.79 & 0.535 & 0.551 & 0.403 & 9.4$\times$ & 4.1e-06 \\
16 & 2048 & 128 & -1.00 & 3.77 & 0.571 & 0.570 & 0.433 & 8.7$\times$ & 2.3e-06 \\
16 & 2048 & 128 & -4.00 & 3.79 & 0.547 & 0.592 & 0.417 & 9.1$\times$ & 7.5e-08 \\
\midrule
\multicolumn{10}{l}{Varying $k$ ($B{=}16$, $n{=}4096$, $\alpha{=}-1$)} \\
16 & 4096 & 16 & -1.00 & 7.50 & 0.657 & 0.602 & 0.440 & 17.0$\times$ & 2.7e-07 \\
16 & 4096 & 256 & -1.00 & 7.56 & 0.605 & 0.579 & 0.430 & 17.6$\times$ & 2.7e-06 \\
16 & 4096 & 2048 & -1.00 & 7.52 & 0.643 & 0.557 & 0.442 & 17.0$\times$ & 2.4e-06 \\
16 & 4096 & 3840 & -1.00 & 7.43 & 0.635 & 0.612 & 0.544 & 13.7$\times$ & 1.4e-06 \\
\bottomrule
\end{tabular}
\end{table}

\begin{table}[t]
\centering
\caption{Analytical VJP versus centred finite-difference gradient
($\varepsilon{=}10^{-5}$, FP64).  ``rel err'' is the max absolute
error divided by $\max\lvert g_{\mathrm{FD}}\rvert$, with pass threshold $10^{-3}$.}
\label{tab:gradcheck}
\begin{tabular}{rrrrrrl}
\toprule
$B$ & $n$ & $k$ & $\alpha$ & Abs Err & Rel Err & Status \\
\midrule
2 & 32 & 4 & -1.00 & 4.33e-11 & 4.24e-11 & \textsc{pass} \\
1 & 64 & 8 & -0.50 & 6.01e-11 & 5.25e-11 & \textsc{pass} \\
2 & 64 & 16 & -2.00 & 1.33e-10 & 2.83e-10 & \textsc{pass} \\
\bottomrule
\end{tabular}
\end{table}

\begin{table}[t]
\centering
\caption{Per-image budget gap, \textbf{Ours as baseline}. Over images successfully attacking both methods, we report the ratio $B_{\text{method}}/B_{\text{ours}}$ (how much more budget the method needs on the same image). Geo-mean = typical factor, wins\% = fraction where Ours is strictly sparser.}
\label{tab:attack-paired}
\small
\setlength{\tabcolsep}{6pt}
\begin{tabular}{lrrrr}
\toprule
Method & $n$ (Both Fool) & Median Ratio & Geo-Mean Ratio & Ours Wins\% \\
\midrule
$\sigma$-zero & 233 & 15$\times$ & 19$\times$ & 100 \\
Sparse-PGD & 237 & 54$\times$ & 43$\times$ & 100 \\
PGD-$L_0$ & 237 & 275$\times$ & 286$\times$ & 100 \\
SparseFool & 0 & -- & -- & -- \\
\bottomrule
\end{tabular}
\end{table}

\begin{table}[t]
\centering
\caption{Successful attack rate at a fixed budget, the \% of the 237 images each method fools with total budget $B \le$ the column value.}
\label{tab:attack-budget}
\small
\setlength{\tabcolsep}{5pt}
\begin{tabular}{lrrrrrrr}
\toprule
Method & $\le100$ & $\le300$ & $\le1000$ & $\le3000$ & $\le10000$ & $\le30000$ & $\le100000$ \\
\midrule
\textbf{Ours (soft)} & 14 & 78 & 99 & 100 & 100 & 100 & 100 \\
$\sigma$-zero & 0 & 1 & 9 & 48 & 84 & 92 & 98 \\
Sparse-PGD & 0 & 0 & 0 & 32 & 32 & 100 & 100 \\
PGD-$L_0$ & 0 & 0 & 0 & 0 & 0 & 31 & 98 \\
SparseFool & 0 & 0 & 0 & 0 & 0 & 0 & 0 \\
\bottomrule
\end{tabular}
\end{table}

\section{Adversarial Objective and Hard-Support Deployment}
\label{app:adv-details}

\subsection*{Order-Statistic Loss}
The attack in Section 6 of the main paper requires no manually specified class margin. Treating $p=\mathrm{softmax}(f(x'))$ as rates, draw $u_j\sim\mathrm{Unif}(0,p_j)$ independently and take the probability that the true class wins this draw,
\begin{equation}
  q \;=\; P\Big(u_y \ge \max_{j\ne y} u_j\Big)
    \;=\; \int_0^1 \prod_{j\ne y}\min\!\Big(\tfrac{p_y}{p_j}\,t,\,1\Big)\,dt .
\end{equation}
This piecewise-polynomial integral has a closed form, is differentiable in the logits, and is evaluated in $O(C\log C)$ over the $C$ classes.  Minimizing $\log q$ pushes the true class out of the top rank, because it is built from the order statistics of the whole class distribution rather than from a single runner-up, it pairs naturally with an order-statistic selector.

\subsection*{Operator-Free Hard Support}
The trained attack is soft, meaning it retains the LapSum operator at inference. To obtain a genuinely operator-free attack, we discretize post hoc. We keep only the top-$L$ pixels by score $r$ at their learned colors $\sigma(z)$, zero the remainder, and, via an exponential bracket followed by a binary search, locate the smallest support $L^\star$ that still changes the prediction. Each probe is a single forward pass. This $L^\star$ is the true hard-support budget. The soft-hard gap is strongly example-dependent. On easily classified images, a few pixels suffice, whereas on confidently classified images, it can require $\sim10^5$ pixels, and for a few images, the soft operator is effectively irreplaceable. We therefore report the soft attack as the primary result and $L^\star$ as the companion deployable figure.

\section{Comparison with DFTopK}
\label{app:dftopk}

DFTopK~\cite{dftopk2025} is the most recent linear-time differentiable top-$k$ and the closest concurrent work, so we compare against it directly. Its main results are downstream, namely cascade-ranking quality on the RecFlow benchmark and an industrial A/B test on a production advertising system. Those depend on proprietary data and serving infrastructure and are out of scope to reproduce, they are also orthogonal to our claims, which are about the operator itself. The directly comparable evaluation is therefore their operator-runtime study (their Table~2), which we reproduce and extend.

\subsection*{Protocol}
We follow their Table~2 setting exactly, namely a single forward$+$backward pass, budget $k=\lfloor n/2\rfloor$, batch size $1$, timing each differentiable top-$k$ operator as a function of $n$. Instead of reusing the values measured on their A800, we re-run all methods on our own hardware (RTX~5060 Laptop), and we include the official LapSum release~\cite{lapsum2025} as the upstream baseline, our accelerated operator (Fast LapSum, auto-dispatched), and the same relaxations as before. DFTopK report runtime only up to $n{=}10^{3}$. We extend the range by more than three orders of magnitude, to $n{=}3{\times}10^{6}$, where the operators begin to diverge.

\begin{table}[t]
\centering
\caption{DFTopK's Table~2 runtime protocol (single forward$+$backward,
$k{=}\lfloor n/2\rfloor$, batch $1$), reproduced on an RTX~5060 Laptop and
extended more than three orders of magnitude past their reported ceiling of
$n{=}10^{3}$.  Times in ms (median). OOM on an $8$\,GB GPU.  All methods are
PyTorch forward$+$backward except Sparse Top-K~\cite{sander2023topk}, whose
reference is JAX-only --- we time its Dykstra variant on the same GPU,
forward only (a lower bound).  In the regime DFTopK report, every
linear-time operator is overhead-bound. Past it, Fast LapSum is the only method
that stays at the millisecond scale, $7.6\times$ faster than DFTopK and
$106\times$ faster than the official LapSum release at $n{=}3{\times}10^{6}$.}
\label{tab:dftopk}
\small
\resizebox{\linewidth}{!}{%
\begin{tabular}{rrrrrrrrr}
\toprule
$n$ & DFTopK & \textbf{Fast LapSum} & Official & NeuralSort & SoftSort & Sparse Top-K & Sinkhorn & Perturbed \\
 & \cite{dftopk2025} & (ours) & LapSum~\cite{lapsum2025} & \cite{neuralsort2019} & \cite{softsort2020} & \cite{sander2023topk}$^{\dagger}$ & -OT & \\
\midrule
$10$        & 0.475 & \textbf{0.642} & 0.591 & 0.847 & 0.616 & 4.231 & 103.9 & 0.729 \\
$100$       & 0.957 & \textbf{0.564} & 0.652 & 0.713 & 0.537 & 3.803 & 98.5  & 0.555 \\
$10^{3}$    & 0.472 & \textbf{0.547} & 0.786 & 0.627 & 0.535 & 3.584 & 94.8  & 0.651 \\
\midrule
$10^{4}$    & 0.526 & \textbf{0.681} & 0.893 & 14.95 & 18.77 & 7.530 & 99.1  & 1.081 \\
$10^{5}$    & 0.713 & \textbf{0.700} & 6.42  & OOM   & OOM   & OOM   & 103.8 & 25.38 \\
$10^{6}$    & 9.307 & \textbf{1.941} & 142.2 & OOM   & OOM   & OOM   & 222.2 & 1583 \\
$3{\times}10^{6}$ & 15.25 & \textbf{1.997} & 212.7 & OOM & OOM & OOM & 7895 & OOM \\
\bottomrule
\end{tabular}}
\\[2pt]
{\footnotesize $^{\dagger}$JAX/XLA, forward only (lower bound). All other columns are PyTorch forward$+$backward.}
\end{table}

\subsection{Findings}
At the sizes DFTopK actually reports ($n\le10^{3}$), every linear-time operator is dominated by fixed launch overhead on a modern GPU. DFTopK, Fast LapSum and even the official LapSum release are all located within a few tenths of a millisecond of each other (\Cref{tab:dftopk}). The gap between DFTopK and LapSum in Table~2 of \cite{dftopk2025}, where LapSum is the slower baseline, closes entirely once the operator is the accelerated one. Our Fast LapSum matches DFTopK at small $n$ and is faster than the official release throughout. The differences appear only beyond their reported ceiling. The $\mathcal{O}(n^2)$ sort-matrix relaxations run out of memory by $n{\sim}10^{5}$, the entropic-OT baseline stays near $100$\,ms, and DFTopK itself, which evaluates two $k$-th value selection passes per step, grows once selection dominates, whereas Fast LapSum stays at the millisecond scale all the way to $n{=}3{\times}10^{6}$.  Beyond raw speed, the two operators differ in a property that matters downstream. DFTopK relaxes the constraint $\sum_i p_i=k$, so its realized budget drifts with the score distribution and the temperature, whereas Fast LapSum holds the budget exactly by construction. The convex-analysis sparse top-$k$ of Sander et al.~\cite{sander2023topk} has a JAX-only reference. We time its Dykstra variant on the same GPU (forward only, a lower bound). It costs $3.6$--$7.5$\,ms even at small $n$, already an order of magnitude above the linear-time operators, and exhausts the $8$\,GB GPU by $n{=}10^{5}$ (its projection iterates build an $O(nk)$ structure), consistent with its being the slowest entry in DFTopK's own Table~2.

\section{Straight-Through Training with Exact-Budget LapSum}
\label{app:ste}

This Supplementary Material specifies how we train through hard exact-$k$ decisions. The coder and the hard-support adversarial deployment of Section 6 of the main paper both use a discrete mask in the forward pass, while $\Lap_k$ supplies the gradient.

\subsection*{The Obstacle}
Suppose we score $N$ candidates with a vector $r\in\mathbb{R}^N$ and want to keep the $k$ largest. The exact discrete answer is the hard mask
\begin{equation}
  m_i=\begin{cases}1 & r_i \text{ among the $k$ largest entries of }r,\\[2pt]0 & \text{otherwise,}\end{cases}
  \qquad \textstyle\sum_i m_i=k,
\end{equation}
and the decoded object is built from $m$.  The difficulty is purely about derivatives, as $m$ is a step function of $r$. Almost everywhere, a small change in $r$ leaves the chosen set unchanged, so $\partial m/\partial r=0$, at the boundary where two scores cross, $m$ jumps and the derivative is a Dirac spike. A gradient of zero almost everywhere, infinite on a measure-zero set, carries no usable learning signal, which means back-propagation through $m$ delivers no information to any score, and the encoder that produces $r$ would never be updated.  The same problem affects $\operatorname{argmax}$, hard thresholding, sign, and rounding to a grid, every genuinely discrete primitive a codec or a sparse attack needs.

\subsection*{Straight-Through Estimator}
The straight-through estimator (STE) sidesteps this by using two different functions for the forward and the backward pass. In the forward pass, we evaluate the true discrete object, in the backward pass, we treat it as a chosen smooth surrogate $p(r)$ and differentiate that instead. With a modern autodiff framework, the entire mechanism is the algebraic identity
\begin{equation}
  \widehat{m} \;=\; p \;+\; \operatorname{sg}\!\big(m-p\big),
\end{equation}
where $\operatorname{sg}(\cdot)$ is the stop-gradient (the \texttt{.detach()} in PyTorch). Numerically $\widehat m=m$ to machine precision, the $p$ and $-p$ cancel, the forward pass sees the exact hard selection, but because the bracketed term is treated as a constant by the differentiator, $\partial\widehat m/\partial r=\partial p/\partial r$. Thus, the model is evaluated with the hard mask but learns through a smooth surrogate, as in standard straight-through training \cite{bengio2013ste,oord2017vqvae}.

\subsection*{Why the Surrogate Is $\Lap_k$}
Any smooth $p$ is sufficient for the STE identity, but the quality of learning depends on how faithfully $p$ represents the hard mask $m$. Three properties are relevant for selection under a budget, and $\Lap_k$ satisfies all three. (i) Exact budget.  $\Lap_k$ returns $p\in[0,1]^N$ with $\sum_i p_i=k$ holding identically for every $r$ (the threshold $\tau$ is solved so that the masses sum to $k$).  A plain sigmoid/softmax threshold cannot fix the count. It drifts with the score distribution, and a Gumbel sample guarantees only that one coordinate is active, not that exactly $k$ are.  Because the forward mask $m$ also has exactly $k$ ones, the surrogate and the true mask agree on the single most important statistic, so the straight-through bias is small.  (ii) Dense gradient. $\partial p_i/\partial r_j$ is non-zero for all pairs through the shared threshold $\tau$. Raising one score lowers $\tau$ and thereby slightly reduces every other membership.  Every one of the $N$ scores receives a gradient signal at each step, not only the two adjacent to the threshold (the failure mode of hard-concrete, or of differentiating only at the boundary, and the qualitative gap to DFTopK quantified in~\Cref{app:dftopk}).  (iii) Tunable sharpness.  A single temperature $\alpha$ interpolates between a near-uniform $p$ (every atom half-selected) and a near-hard $p$ (almost $\{0,1\}$), the gradient is well-conditioned across this range.  The companion soft- and differentiable-sort and top-$k$ relaxations \cite{xie2020softtopk,blondel2020,sander2023topk} are similar in spirit but, as detailed in Section~5 of the main paper, do not achieve the exact budget at GPU-primitive cost on million-element inputs.

\subsection*{Estimator Used in the Applications}
Putting the pieces together, one optimization step on the coder is
\begin{equation}
  p=\Lap_k(r;\alpha)\ \ (\textstyle\sum_i p_i=k),\qquad
  m=\mathbf{1}\!\left[r\in\mathrm{top}\text{-}k\right]\ \ (\textstyle\sum_i m_i=k),\qquad
  \widehat m = p + \operatorname{sg}(m-p),
\end{equation}

\begin{wrapfigure}{r}{0.33\textwidth}
\centering
\includegraphics[width=0.31\textwidth]{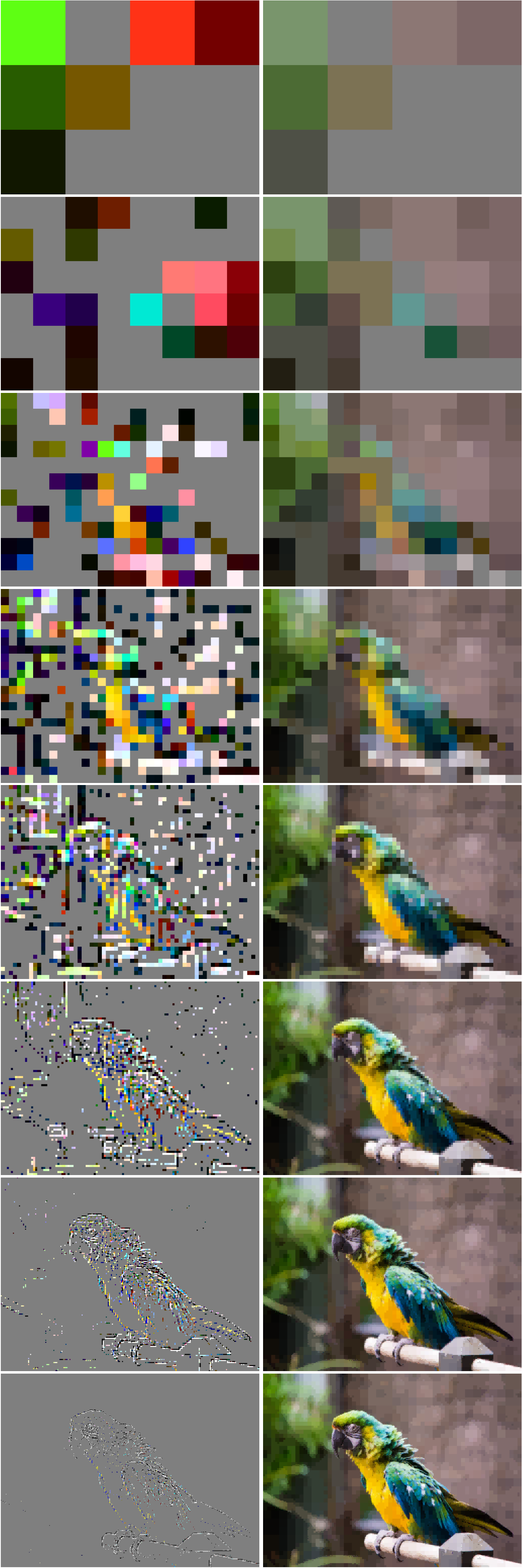}
\caption{The full build-up that main-text Figure 5 summarizes. The coder assembles the macaw scale by scale across all occupied levels, coarse (top) to fine.  The left column shows the per-level code $D_\ell$ (gray where unused) and the right column the running reconstruction $R_\ell$, reaching the full $1536\times2048$ image at the finest level ($k{=}10^4$ atoms, $0.051$\,bpp, $27.3$\,dB).}
\label{fig:coder-full}
\end{wrapfigure}

\noindent and the decoder is evaluated on $\widehat m$, so the reconstruction MSE is computed from the exact $k$-atom code that the deployed decoder will use. Three remarks make this concrete. (a) No train/test gap.  Naively training on the soft $p$ and only hardening at deployment lets the atoms over-fit fractional memberships (an atom worth $0.999$ of a unit costs almost nothing in MSE but costs a full unit once $m\in\{0,1\}$). They collapse when the mask is hardened.  Forwarding $m$ removes that gap by construction, and is worth several dB in the coder. (b) Annealing.  We ramp the temperature $\alpha:-1\!\to\!-0.05$ over training so $p$ starts soft, a smooth landscape in which all atoms compete and gradients are large, and ends close to the hard selection, shrinking the residual STE bias $\|m-p\|$ to almost nothing by the time the code is frozen.  This is the selection-side analogue of deterministic-annealing/temperature schedules used with Gumbel-Softmax~\cite{jang2017gumbel}. (c) Deliberate bias.  STE is a biased gradient estimator. It is not the true (a.e.-zero) derivative of $m$, but the derivative of a deliberately substituted $p$.  The justification is empirical and, here, structural; with an exact-budget, dense-gradient, annealed surrogate, the substituted slope aligns with a finite-difference estimate of the discrete objective almost everywhere, which is all a first-order method needs.

\subsection*{The Other Two Discrete Operations}
Selection is the most difficult case. The coder encounters two further discretizations and addresses each with a variant of the same idea.  The first is value quantization. A stored color $c_\ell\tanh(w_i)$ must be rounded to $b_\ell$ bits.  Rather than straight-through a $\operatorname{round}$, we add uniform dither of width equal to one quantization step, $c_\ell\tanh(w_i)+\Delta_\ell(\mathcal U-\tfrac12)$ with $\Delta_\ell=2|c_\ell| 2^{-b_\ell}$, during training. In expectation, this reproduces the quantizer while keeping a non-zero gradient with respect to both the value $w_i$ and the bit depth $b_\ell$ (so the rate can back-propagate), and the deployed decoder rounds for real.  This is exactly the subtractive-dither argument of \cite{roberts1962dither} re-used as a differentiable surrogate, and the $\tanh$ keeps the amplitude bounded with a live gradient (a hard clip would zero it at the boundary).  The second is the address (rate) count. The number of bits to name which super-pixels are on is the significance-map cost $\log_2\binom{H_\ell W_\ell}{S_\ell}$ with $S_\ell=\sum_{i\in\ell} m_i$. We evaluate the binomial through $\log\Gamma$ (\texttt{lgamma}), a smooth extension of the factorial, so the integer combinatorial rate becomes a differentiable function of the soft counts and the budget itself enters the loss. In every case the pattern is the same. We keep the true discrete object in the forward pass, and substitute a faithful smooth surrogate: $\Lap_k$ for selection, dither for quantization, $\log\Gamma$ for the count, purely to obtain a gradient.

\section{Differentiable Image Coder: Details}
\label{app:coder}

This Supplementary Material specifies the million-scale coder of Section 6 of the main paper in detail, together with the coding-theoretic view that motivates it.  All quantities are for the factor-two pyramid over a $1536\times2048$ image (the native macaw rounded to the nearest multiple of $512$) that halves both axes at each step from a base of $3\times4=\text{image}/512$, with $L=10$ levels and $N=4{,}194{,}300$ super-pixels; level $\ell$ has resolution $h_\ell\times w_\ell$ (a grid of square cells), $\mathrm{pool}_{h\times w}$ is area averaging and $\mathrm{up}(\cdot)$ is nearest-neighbor (block) upsampling to full resolution.

The full build-up of the coder is shown in \Cref{fig:coder-full}.
\subsection{Motivation: Lossy Coding as Exact-Budget Sparse Selection}

Every transform image coder, JPEG (block DCT) and JPEG\,2000 (wavelet subbands), runs the same three steps, namely an analysis transform that concentrates image energy into a few coefficients, quantization of those coefficients, and entropy coding of the result.  The bit stream then carries two qualitatively different things, namely, which coefficients are kept (the significance map/coordinate list), and the quantized values of the kept ones, and the design problem is the rate--distortion trade-off between the two under a fixed bit budget.  Our coder keeps this skeleton but replaces the fixed transform with a learned over-complete multi-scale dictionary, and asks one question. If the only selection primitive is an exact-budget, differentiable top-$k$, can an encoder learn a sensible code end-to-end?  A codec is precisely where exact-budget selection belongs.  It has a hard rate budget rather than a soft sparsity penalty, the number of transmitted coefficients is the significance-map cost, and one wants to back-propagate the rate into the encoder.  A soft threshold cannot pin that count, so it cannot control the position budget, whereas $\Lap_k$ holds $\sum_i p_i=k$ exactly and is differentiable in the scores (and in $k$).  The rest of this Supplementary Material is the smallest coder--dictionary, exact-budget selection, differentiable rate--built around the operator and trained by reconstruction MSE under a hard budget $B$. It is not meant to outperform JPEG (it has neither an entropy coder nor a color transform) but to exercise the operator inside a genuine rate--distortion loop at the scale the data actually has.

\section{Matched-Budget Dense Control}
\label{app:dense}

The dense reference in Section 6 of the main paper is an ablation of our own attack that spends the same total budget but spreads it uniformly over every pixel ($\delta=\varepsilon\,\tanh(P)$ with $\varepsilon=k/HW$) instead of on the operator-chosen support, optimized by the identical loss and optimizer.  Spread uniformly, the budget yields a successful attack only at $B\!\approx\!5.8\times10^{3}$ (about $5{,}900$ pixels' worth), an order of magnitude above our sparse threshold $B\!\approx\!560$ (\Cref{fig:advdense}).  This is the quantitative form of concentrating the perturbation rather than scattering it, and complements the magnitude ablation of Section 6 of the main paper. At a fixed total budget, placing it on a chosen support outperforms diffusing it across the image.

\begin{figure}[ht]
\centering
\includegraphics[width=0.58\textwidth]{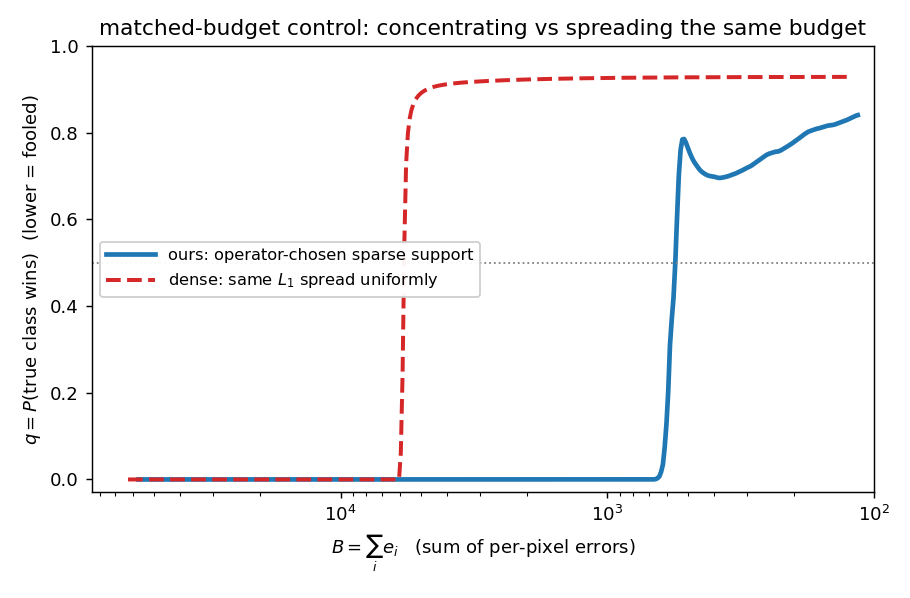}
\caption{Matched-budget control.  The same total budget $B=\sum_i e_i$ (per-pixel error summed over the image) spent on the operator-chosen sparse support (ours, solid) versus spread uniformly over all pixels (dense $L_\infty$, dashed), with the axis decreasing rightward.  Concentrating the budget yields a successful attack at roughly an order of magnitude smaller budget.}
\label{fig:advdense}
\end{figure}


\end{document}